\newcommand{\inner}[2]{\left\langle #1, #2 \right\rangle}
\newcommand{\norm}[1]{\left\lVert{#1}\right\rVert}
\newcommand{\abs}[1]{\left\lvert{#1}\right\rvert}
\newcommand{\R}{\mathbb{R}}
\newcommand{\calS}{\mathcal{S}}
\newtheorem{thm}{Theorem}[section]
\newtheorem{lem}{Lemma}[section]
\newcounter{saveenumi}
\algrenewcommand\algorithmicrequire{\textbf{Parameters:}}
\algrenewcommand\algorithmicensure{\textbf{Initialization:}}
\newcommand{\removed}[1]{}
\newcommand{\vx}{\boldsymbol{x}}
\newcommand{\vin}{v_{in}}
\newcommand{\vout}{v_{out}}
\newcommand{\relu}{\sigma_{\textsc{relu}}}
\newcommand{\IN}{\text{in}}
\newcommand{\OUT}{\text{out}}
\newcommand{\picwidth}{1.84in}
\begin{document}

\title{Geometry of Optimization and Implicit Regularization\\ in Deep Learning}
\author{\\\addr This survey chapter was done as a part of Intel Collaborative Research institute for Computational Intelligence (ICRI-CI) ``Why \& When Deep Learning works -- looking inside Deep Learning'' compendium with the generous support of ICRI-CI.\\ \\\\
\name Behnam Neyshabur \email bneyshabur@ttic.edu \\
       \addr Toyota Technological Institute at Chicago\\
       Chicago, IL 60637, USA
       \AND
       \name Ryota Tomioka \email ryoto@microsoft.com \\
       \addr MSR Cambridge\\
       Cambridge, CB1 2FB, UK
       \AND
       \name Ruslan Salakhutdinov \email rsalakhu@cs.cmu.edu \\
       \addr School of Computer Science\\
       Carnegie Mellon University\\
       Pittsburgh, PA 15213, USA
       \AND
       \name Nathan Srebro \email nati@ttic.edu \\
       \addr  \addr Toyota Technological Institute at Chicago\\
       Chicago, IL 60637, USA}
\editor{}

\maketitle

\begin{abstract}
  We argue that the optimization plays a crucial role in
  generalization of deep learning models through implicit
  regularization.  We do this by demonstrating that generalization
  ability is not controlled by network size but rather by some other
  implicit control.  We then demonstrate how changing the empirical
  optimization procedure can improve generalization, even if actual
  optimization quality is not affected.  We do so by studying the
  geometry of the parameter space of deep networks, and devising an
  optimization algorithm attuned to this geometry. 
\end{abstract}

\begin{keywords}
Deep Learning, Implicit Regularization, Geometry of Optimization, Path-norm, Path-SGD
\end{keywords}

\section{Introduction}
Central to any form of learning is an inductive bias that induces some
sort of capacity control (i.e.~restricts or encourages predictors to
be ``simple'' in some way), which in turn allows for generalization.
The success of learning then depends on how well the inductive bias
captures reality (i.e.~how expressive is the hypothesis class of
``simple'' predictors) relative to the capacity induced, as well as on
the computational complexity of fitting a ``simple'' predictor to the
training data.

Let us consider learning with feed-forward networks from this
perspective.  If we search for the weights minimizing the training
error, we are essentially considering the hypothesis class of
predictors representable with different weight vectors, typically for
some fixed architecture.  Capacity is then controlled by the size
(number of weights) of the network\footnote{The exact correspondence
  depends on the activation function---for hard thresholding
  activation the pseudo-dimension, and hence sample complexity, scales
  as $O(S \log S)$, where $S$ is the number of weights in the network. 
  With sigmoidal activation it is between $\Omega(S^2)$ and
  $O(S^4)$ \citep{anthony1999}.}.  Our justification for using such networks is
then that many interesting and realistic functions can be represented
by not-too-large (and hence bounded capacity) feed-forward networks.
Indeed, in many cases we can show how specific architectures can
capture desired behaviors.  More broadly, any $O(T)$ time computable
function can be captured by an $O(T^2)$ sized network, and so the
expressive power of such networks is indeed great~\cite[Theorem~9.25]{sipser2012}.

At the same time, we also know that learning even moderately sized
networks is computationally intractable---not only is it NP-hard to
minimize the empirical error, even with only three hidden units, but
it is hard to learn small feed-forward networks using {\em any}
learning method (subject to cryptographic assumptions).  That is, even
for binary classification using a network with a single hidden layer and a logarithmic (in the
input size) number of hidden units, and even if we know the true
targets are {\em exactly} captured by such a small network, there is
likely no efficient algorithm that can ensure error better than 1/2
\citep{klivans2006cryptographic,Daniely14}---not if the algorithm
tries to fit such a network, not even if it tries to fit a much larger
network, and in fact no matter how the algorithm represents
predictors.  And so, merely knowing that some not-too-large
architecture is excellent in expressing reality does {\em not} explain
why we are able to learn using it, nor using an even larger network.
Why is it then that we succeed in learning using multilayer
feed-forward networks?  Can we identify a property that makes them
possible to learn?  An alternative inductive bias?

In section \ref{sec:netsize}, we make our first steps at shedding light on this question by
going back to our understanding of network size as the capacity
control at play. Our main observation, based on empirical experimentation with
single-hidden-layer networks of increasing size (increasing number of
hidden units), is that size does {\em not} behave as a capacity
control parameter, and in fact there must be some other, implicit,
capacity control at play.  We suggest that this hidden capacity
control might be the real inductive bias when learning with deep
networks.

Revisiting the choice of gradient descent, we recall that optimization is inherently tied to a choice of geometry
or measure of distance, norm or divergence. Gradient descent for example is tied to the $\ell_2$ norm as it
is the steepest descent with respect to $\ell_2$ norm in the parameter space, while coordinate descent corresponds
to steepest descent with respect to the $\ell_1$ norm and exp-gradient (multiplicative weight) updates is tied to
an entropic divergence. Moreover, at least when the objective function is convex, convergence behavior is
tied to the corresponding norms or potentials. For example, with gradient descent, or SGD, convergence
speeds depend on the $\ell_2$ norm of the optimum.  The norm or divergence can be viewed as a
regularizer for the updates.  There is therefore also a strong link
between regularization for optimization and regularization for
learning: optimization may provide implicit regularization in terms of
its corresponding geometry, and for ideal optimization performance the
optimization geometry should be aligned with inductive bias driving
the learning \citep{srebro11}.

Is the $\ell_2$ geometry on the weights the appropriate geometry for
the space of deep networks?  Or can we suggest a geometry with more
desirable properties that would enable faster optimization and perhaps
also better implicit regularization?  As suggested above, this
question is also linked to the choice of an appropriate regularizer
for deep networks.

Focusing on networks with RELU activations in this section, we observe that scaling down the incoming edges to a hidden unit and scaling up the outgoing edges by the same factor yields an equivalent network computing the same function. Since predictions are invariant to such rescalings, it is natural to seek a geometry, and corresponding optimization method, that is similarly invariant.

We consider here a geometry inspired by max-norm regularization
(regularizing the maximum norm of incoming weights into any unit)
which seems to provide a better inductive bias compared to the
$\ell_2$ norm (weight decay) \citep{goodfellow13,srivastava14}.  But to
achieve rescaling invariance, we use not the max-norm itself, but
rather the minimum max-norm over all rescalings of the weights.  We
discuss how this measure can be expressed as a ``path regularizer''
and can be computed efficiently.

We therefore suggest a novel optimization method, Path-SGD, that is an
approximate steepest descent method with respect to path
regularization.  Path-SGD is rescaling-invariant and we demonstrate that
Path-SGD outperforms gradient descent and AdaGrad for classifications
tasks on several benchmark datasets. This again demonstrates the importance of
implicit regularization that is introduced by optimization.

This summary paper combines material previously presented by the
authors at the \nth{3} International Conference on Learning
Representations (ICLR), the \nth{28} Conference on Learning Theory
(COLT) and Advances in Neural Information Processing Systems (NIPS)
28, as well as Intel Collaborative Research Institutes retreats
\citep{neyshabur2015path,neyshabur2015norm,neyshabur2015search}.

\paragraph{Notations} A feedforward neural network that computes a
function $f:\R^D \rightarrow \R^C$ can be represented by a directed
acyclic graph (DAG) $G(V,E)$ with $D$ input nodes
$\vin[1],\dots,\vin[D]\in V$, $C$ output nodes $\vout[1],\dots,
\vout[C]\in V$, weights $w:E\rightarrow \R$ and an activation function
$\sigma:\R\rightarrow\R$ that is applied on the internal nodes (hidden
units). We denote the function computed by this network as
$f_{G,w,\sigma}$. In this paper we focus on RELU (REctified Linear
Unit) activation function $\relu(x) = \max\{0,x\}$. We refer to the
depth $d$ of the network which is the length of the longest directed
path in $G$.  For any $0\leq i \leq d$, we define $V^i_{\IN}$ to be
the set of vertices with longest path of length $i$ to an input unit
and $V^{i}_{\OUT}$ is defined similarly for paths to output units. In
layered networks $V^i_{\IN} = V^{d-i}_{\OUT}$ is the set of hidden
units in a hidden layer $i$.

\section{Implicit Regularization}\label{sec:netsize}

\begin{figure*}[t!]
\hbox{ \centering
\includegraphics[width=0.455\textwidth]{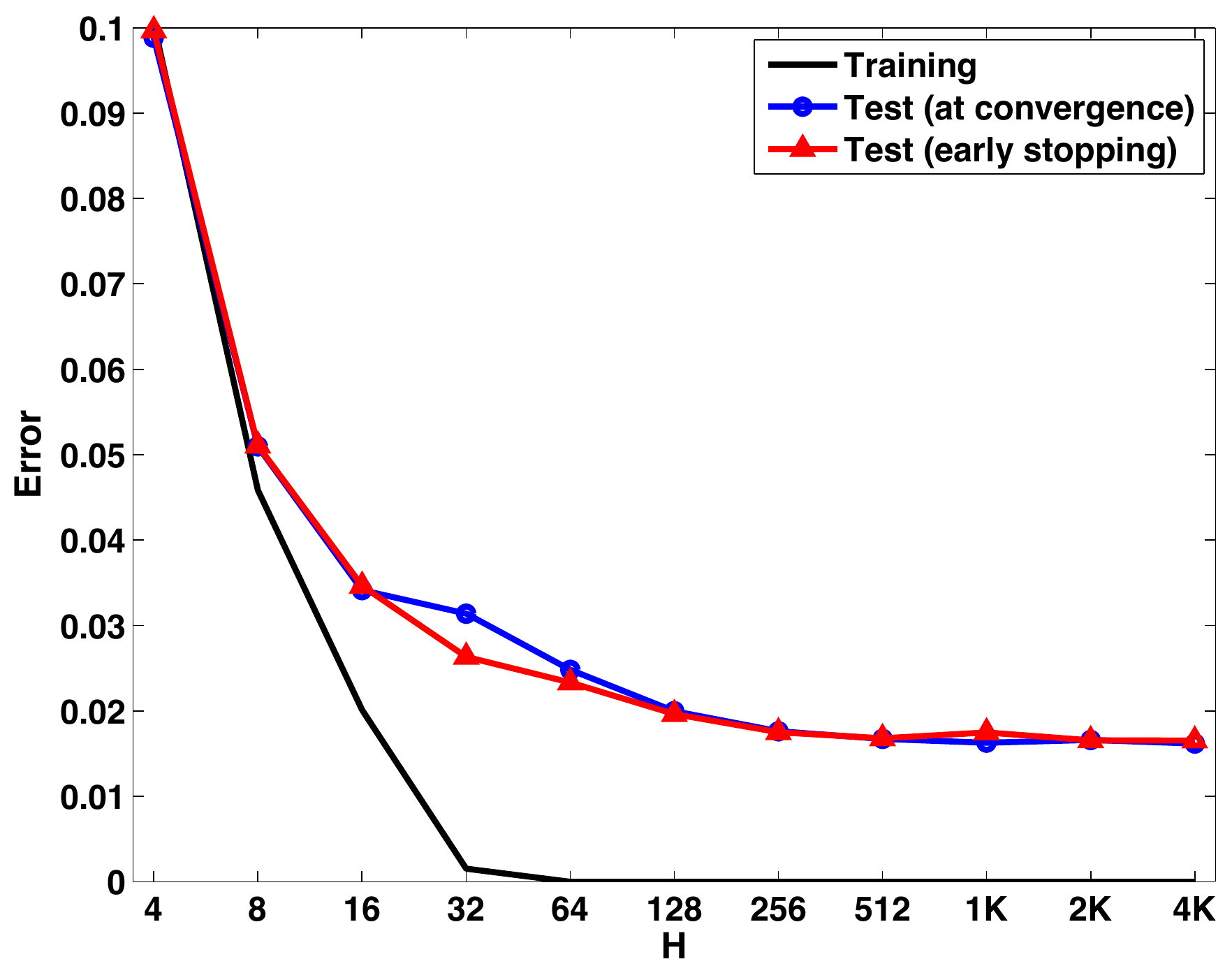}
\hspace{0.1in}
\includegraphics[width=0.45\textwidth]{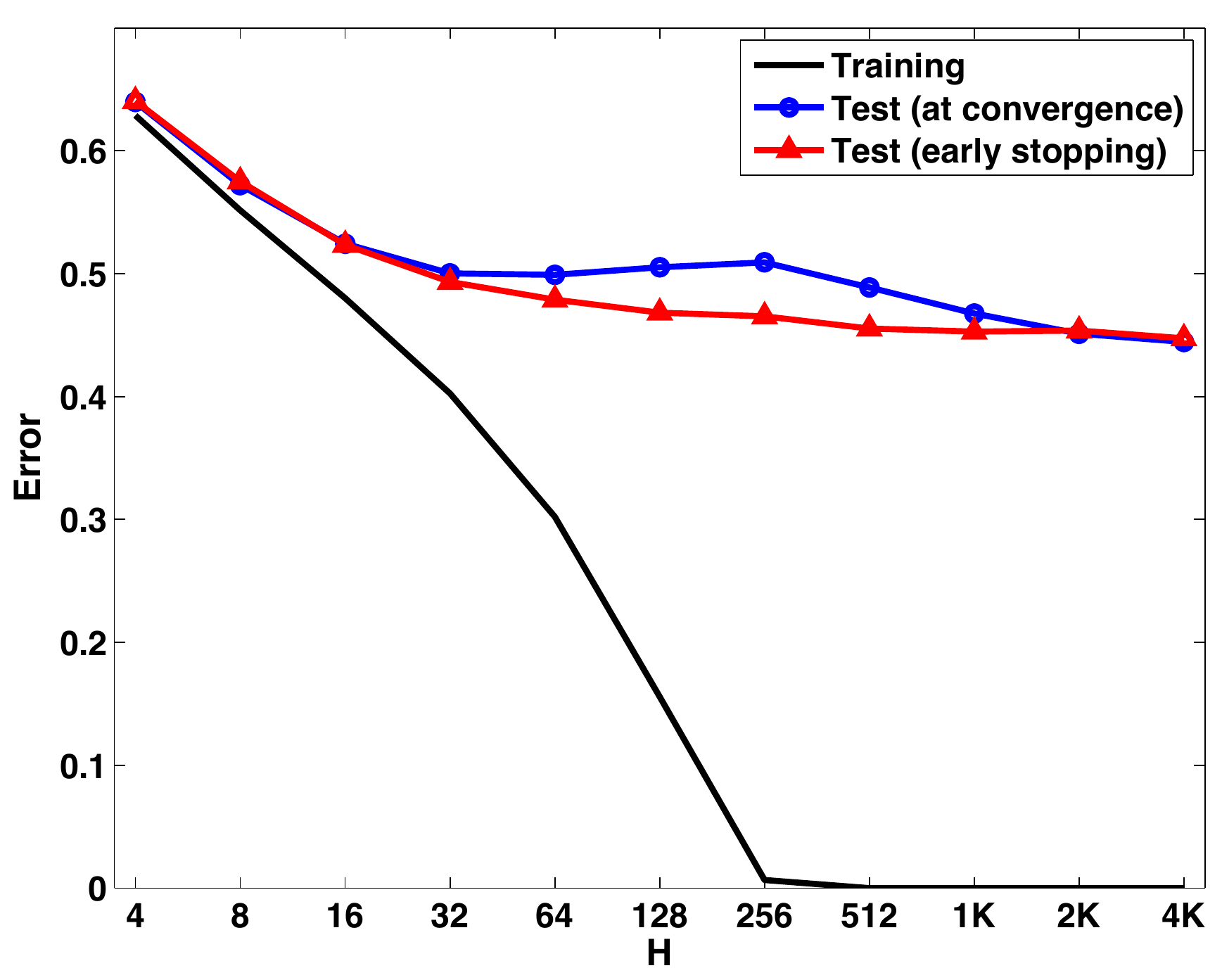}
}
\begin{picture}(0,0)(0,0)
\put(85, 155){MNIST}
\put(280, 155){CIFAR-10}
\end{picture}
\vspace{0.1in}
\caption{\footnotesize The training error and the test error based on different stopping
  criteria when 2-layer NNs with different number of hidden units
  are trained on MNIST and CIFAR-10. Images in both datasets are downsampled
  to 100 pixels. The size of the training set is 50000 for MNIST and 40000 for CIFAR-10.
  The early stopping is based on the error on a validation set
  (separate from the training and test sets) of size 10000. The training was
  done using stochastic gradient descent with momentum and mini-batches
  of size 100. The network was initialized with weights generated randomly from
  the Gaussian distribution. The initial step size and momentum were set to 0.1 and 0.5
  respectively. After each epoch, we used the update rule $\mu^{(t+1)}=0.99\mu^{(t)}$
  for the step size and $m^{(t+1)}=\min\{0.9,m^{(t)}+0.02\}$ for the momentum. 
  \label{fig:intro}
}
\end{figure*}

Consider training a feed-forward network by finding the weights
minimizing the training error.  Specifically, we will consider a
network with $D$ real-valued inputs $\vx=(x[1],\ldots,x[D])$, a single
hidden layer with $H$ rectified linear units, and $C$ outputs
$y[1],\ldots,y[k]$ where the weights are learned by minimizing
a (truncated) soft-max cross entropy loss\footnote{When using soft-max
  cross-entropy, the loss is never exactly zero for correct
  predictions with finite margins/confidences.  Instead, if the data
  is separable, in order to minimize the loss the weights need to be
  scaled up toward infinity and the cross entropy loss goes to zero,
  and a global minimum is never attained.  In order to be able to say
  that we are actually reaching a zero loss solution, and hence a
  global minimum, we use a slightly modified soft-max which does not
  noticeably change the results in practice.  This truncated loss
  returns the same exact value for wrong predictions or correct
  prediction with confidences less than a threshold but returns zero
  for correct predictions with large enough margins: Let
  $\{s_i\}_{i=1}^k$ be the scores for $k$ possible labels and $c$ be
  the correct labels. Then the soft-max cross-entropy loss can be
  written as $\ell(s,c) = \ln \sum_{i} \exp(s_i - s_c)$ but we instead
  use the differentiable loss function $\hat{\ell}(s,c) = \ln \sum_{i}
  f(s_i-s_c)$ where $f(x)=\exp(x)$ for $x\geq -11$ and $f(x)
  =\exp(-11) [x+13]_+^2/4$ otherwise. Therefore, we only deviate from
  the soft-max cross-entropy when the margin is more than $11$, at
  which point the effect of this deviation is negligible (we always
  have $\abs{\ell(s,c)-\hat{\ell}(s,c)}\leq 0.000003k$)---if there are
  any actual errors the behavior on them would completely dominate
  correct examples with margin over $11$, and if there are no errors
  we are just capping the amount by which we need to scale up the
  weights.} on $n$ labeled training examples.  The total number of
weights is then $H(C+D)$.

What happens to the training and test errors when we increase the
network size $H$? The training error will necessarily decrease.  The
test error might initially decrease as the approximation error is
reduced and the network is better able to capture the targets.
However, as the size increases further, we loose our capacity control
and generalization ability, and should start overfitting.  This is the
classic approximation-estimation tradeoff behavior.

Consider, however, the results shown in Figure \ref{fig:intro}, where
we trained networks of increasing size on the MNIST and CIFAR-10
datasets.  Training was done using stochastic gradient descent with
momentum and diminishing step sizes, on the training error and without
any explicit regularization.  As expected, both training and test
error initially decrease.  More surprising is that if we increase the
size of the network past the size required to achieve zero training
error, the test error continues decreasing!  This behavior is not at
all predicted by, and even contrary to, viewing learning as fitting a
hypothesis class controlled by network size.  For example for MNIST, 32 units
are enough to attain zero training error.  When we allow more units,
the network is not fitting the training data any better, but the
estimation error, and hence the generalization error, should increase
with the increase in capacity.  However, the test error goes down.  In
fact, as we add more and more parameters, even beyond the number
of training examples, the generalization error does not go up.

What is happening here?  A possible explanation is that the
optimization is introducing some implicit regularization.  That is, we are
implicitly trying to find a solution with small ``complexity'', for
some notion of complexity, perhaps norm.  This can explain why we do
not overfit even when the number of parameters is huge.  Furthermore,
increasing the number of units might allow for solutions that actually
have lower ``complexity'', and thus generalization better.  Perhaps an
ideal then would be an infinite network controlled only through this
hidden complexity.

We want to emphasize that we are not including any explicit
regularization, neither as an explicit penalty term nor by modifying
optimization through, e.g., drop-outs, weight decay, or with
one-pass stochastic methods.  We are using a stochastic method, but we
are running it to convergence---we achieve zero surrogate loss and zero training error. In fact, we also tried training using batch conjugate
gradient descent and observed almost identical behavior.  But it seems
that even still, we are not getting to some random global
minimum---indeed for large networks the vast majority of the many
global minima of the training error would horribly overfit.  Instead,
the optimization is directing us toward a ``low complexity'' global
minimum.

We have argued that the implicit regularization is due to the optimization. It is therefore expected that different optimization methods
introduce different implicit regularizations which leads to different generalization properties. In an attempt to find an optimization method with better
generalization properties, we recall that the optimization is also tied to a choice of geometry/distance measure in the parameter space.  We look into the desirable properties of a geometry for neural networks and suggest an optimization algorithm that is tied to that geometry.
\section{The Geometry of Optimization: Rescaling and Unbalanceness} \label{sec:geometry}

\begin{figure}[t!] \label{fig:unbalanced}
\hspace{0.5in}
\subfloat[Training on MNIST]{
  \includegraphics[width=0.23\textwidth]{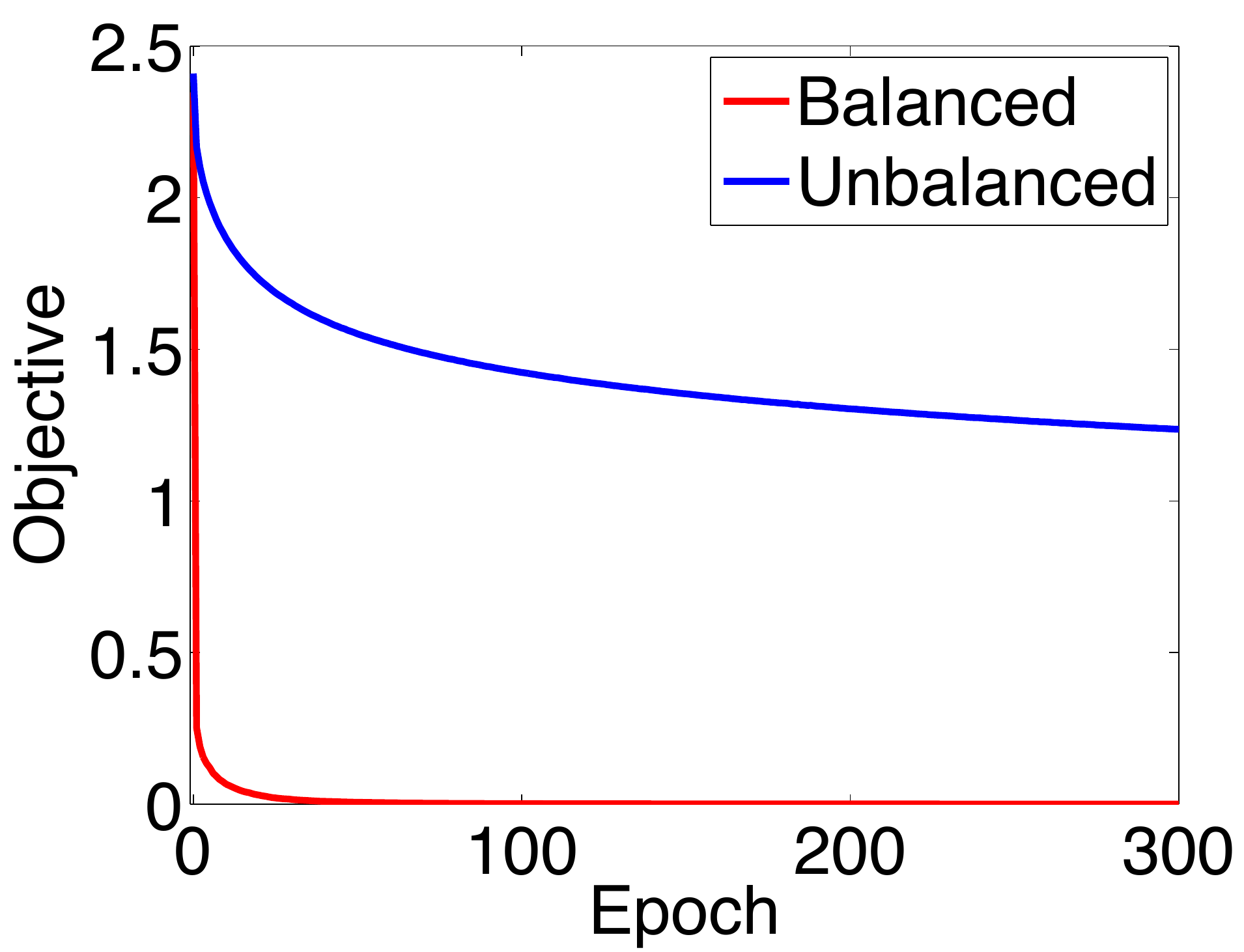}\label{fig:compare-a}
 }\hspace{1in}
 \subfloat[  \small Weight explosion in an unbalanced network]{
 \hspace{0.4in}
  \includegraphics[width=0.32\textwidth]{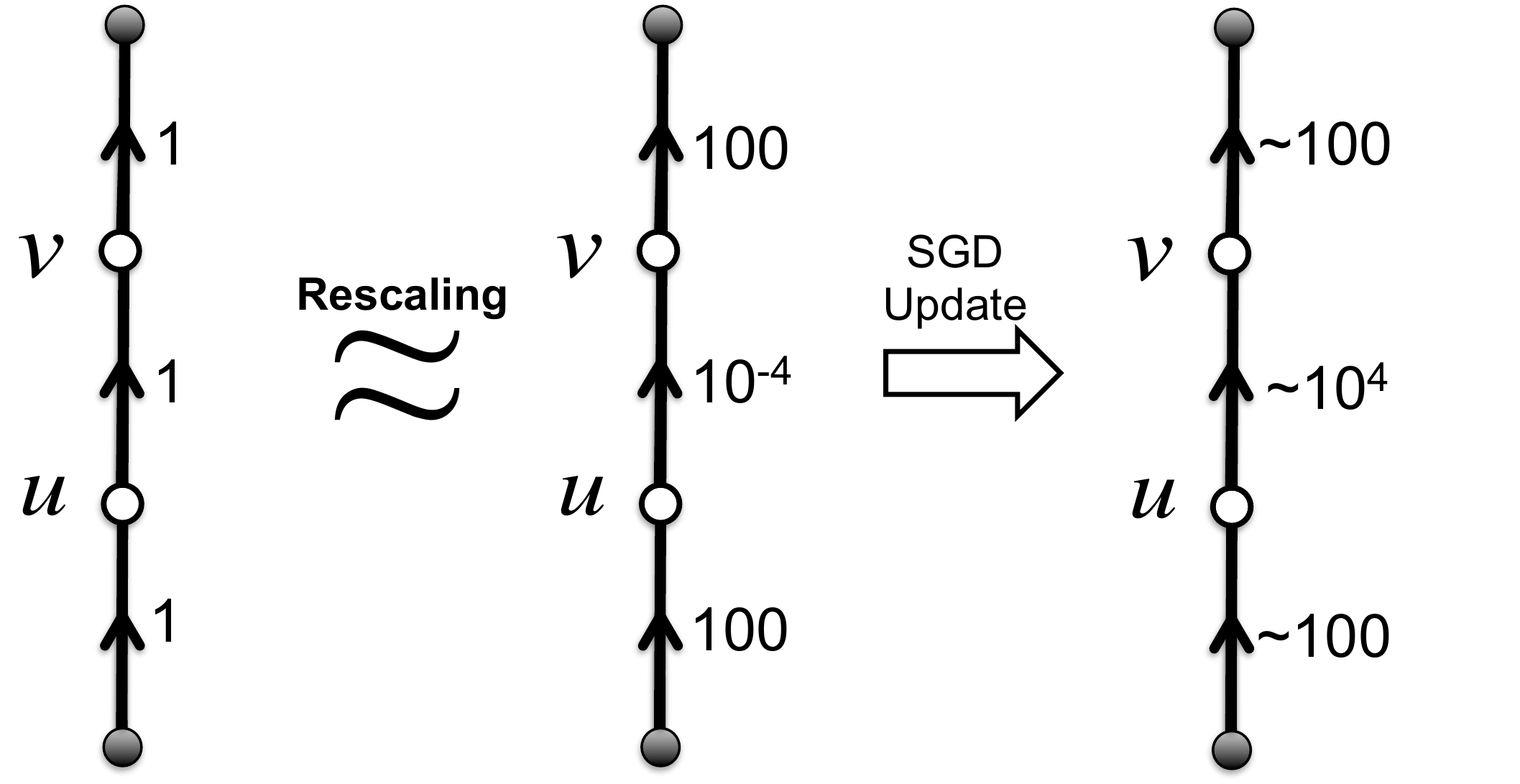}\label{fig:compare-b}
  \hspace{0.3in}
 }
  \vspace{-0.08in}
 \newline
 \begin{center}
  \subfloat[ \small Poor updates in an unbalanced network]{
  \includegraphics[width=0.9\textwidth]{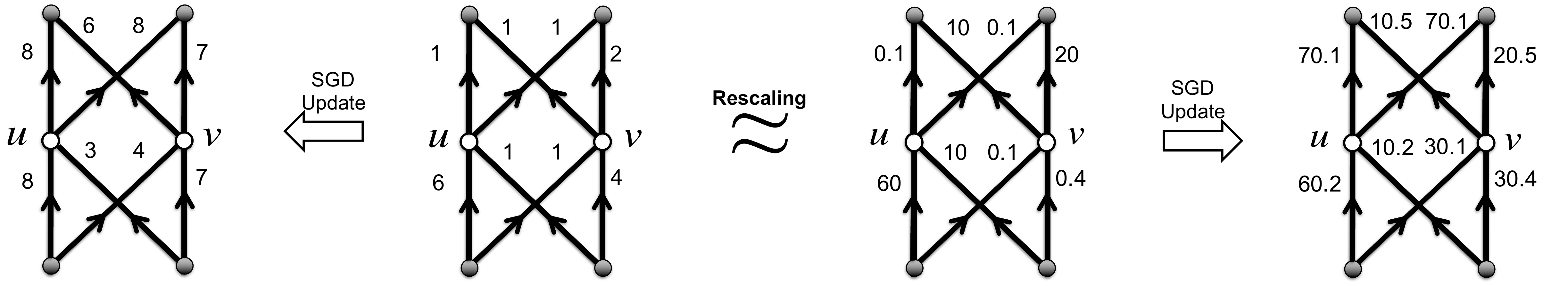}\label{fig:compare-c}
   }
   \end{center}
   \vspace{-0.16in}
 \caption{ \footnotesize (a): Evolution of the cross-entropy error function when training a 
feed-forward network on MNIST with two hidden layers, each containing 4000 hidden units.
The unbalanced initialization (blue curve) is generated by applying a sequence of rescaling functions on the balanced initializations (red curve). (b): Updates for a simple case where the input is $x=1$, 
thresholds are set to zero (constant), the stepsize is 1, and the gradient with respect to output is $\delta = -1$. (c): Updated network for the case where the input is $x=(1,1)$, thresholds are set to zero (constant), the stepsize is 1, and the gradient with respect to output is $\delta=(-1,-1)$. }
 \label{fig:unbalanced}
 \vspace{-0.1in}
\end{figure}

In this section, we look at the behavior of the Euclidean geometry under rescaling and unbalanceness. One of the special properties of RELU activation function is non-negative homogeneity. That is, for any scalar $c\geq 0$ and any $x\in \R$, we have $\relu(c\cdot x)=c\cdot \relu(x)$. This  interesting property allows the network to be rescaled without changing the function computed by the network. We define the {\em rescaling function} $\rho_{c,v}(w)$, such that given the weights of the network $w:E\rightarrow \R$, a constant $c>0$, and a node $v$, the rescaling function multiplies the incoming edges and divides the outgoing edges of $v$ by $c$. That is, $\rho_{c,v}(w)$ maps $w$ to the weights $\tilde{w}$ for the rescaled network, where for any $(u_1\rightarrow u_2)\in E$:
\begin{equation}
\tilde{w}_{(u_1\rightarrow u_2)}=
\begin{cases}
c.w_{(u_1\rightarrow u_2)}& u_2=v,\\
\frac{1}{c}w_{(u_1\rightarrow u_2)}& u_1=v,\\
w_{(u_1\rightarrow u_2)}& \text{otherwise.}\\
\end{cases}
\end{equation}
It is easy to see that the rescaled network computes the same function, i.e. $f_{G,w,\relu} =f_{G,\rho_{c,v}(w),\relu}$. We say that the two networks with weights $w$ and $\tilde{w}$ are {\em rescaling equivalent} denoted by $w\sim \tilde{w}$ if and only if one of them can be transformed to another by applying a sequence of rescaling functions $\rho_{c,v}$.

Given a training set $\calS = \{(x_1,y_n),\dots, (x_n,y_n)\}$, our goal is to 
minimize the following objective function:
\begin{equation}
L(w) = \frac{1}{n}\sum_{i=1}^n \ell(f_w(x_i),y_i).
\end{equation}
Let $w^{(t)}$ be the weights at step $t$ of the optimization. We consider update step of the following form $w^{(t+1)} = w^{(t)} + \Delta w^{(t+1)}$. For example, for gradient descent, we have  $\Delta w^{(t+1)} = -\eta\nabla L(w^{(t)})$, where $\eta$ is the step-size. In the stochastic setting, such as SGD or mini-batch gradient descent, we calculate the gradient on a small subset of the training set. 

Since {\em rescaling equivalent} networks compute the same function,
it is desirable to have an update rule that is not affected by
rescaling. We call an optimization method {\em rescaling invariant} if
the updates of rescaling equivalent networks are rescaling equivalent.
That is, if we start at either one of the two rescaling equivalent
weight vectors $\tilde{w}^{(0)}\sim w^{(0)}$, after applying $t$
update steps separately on $\tilde{w}^{(0)}$ and $w^{(0)}$, they will
remain rescaling equivalent and we have $\tilde{w}^{(t)} \sim
w^{(t)}$.

Unfortunately, gradient descent is {\em not} rescaling invariant. The main problem with the gradient updates is that scaling down the weights of an edge will also scale up the gradient which, as we see later, is exactly the opposite of what is expected from a rescaling invariant update. 

Furthermore, gradient descent performs very poorly on ``unbalanced''
networks.  We say that a network is {\em balanced} if the norm of
incoming weights to different units are roughly the same or within a
small range. For example, Figure~\subref*{fig:compare-a} shows a huge
gap in the performance of SGD initialized with a randomly generated
balanced network $w^{(0)}$, when training on MNIST, compared to a
network initialized with unbalanced weights $\tilde{w}^{(0)}$. Here
$\tilde{w}^{(0)}$ is generated by applying a sequence of random
rescaling functions on $w^{(0)}$ (and therefore $w^{(0)}\sim
\tilde{w}^{(0)}$).

In an unbalanced network, gradient descent updates could blow up the smaller weights, while keeping the larger weights almost unchanged. This is illustrated in Figure ~\subref*{fig:compare-b}. If this were the only issue, one could scale down all the weights after each update. However, in an unbalanced network,  the relative changes in the weights are also very different compared to a balanced network. For example, Figure \subref*{fig:compare-c} shows how two rescaling equivalent networks could end up computing a very different function after only a single update.

\removed{The same problem exists in other optimization methods used in deep learning, including AdaGrad updates~\citep{adagrad}, where $\Delta w^{(t+1)}_e = -\eta \left(\partial L / \partial w_e^{(t)}\right)/\sqrt{\sum_{k=1}^t (\partial L / \partial w_e^{(k)})^2}$. In order to better understand and investigate this issue, we next discuss different scale measures for training neural networks.}

\section{Magnitude/Scale measures for deep networks}
Following \cite{neyshabur2015norm}, we consider the grouping of weights going into each node of the
network. This forms the following generic group-norm type regularizer, parametrized by $1\leq p,q \leq\infty$:
\begin{equation}
  \label{eq:mu}
  \mu_{p,q}(w) = \left(\sum_{v \in V}\left(\sum_{(u\rightarrow v) \in E} \left\lvert w_{(u\rightarrow v)}\right\rvert ^p\right)^{q/p}\right)^{1/q}.
\end{equation}
Two simple cases of above group-norm are $p=q=1$ and $p=q=2$ that
correspond to overall $\ell_1$ regularization and weight decay
respectively. Another form of regularization that is shown to be very
effective in RELU networks is the max-norm regularization, which is the
maximum over all units of norm of incoming edge to the
unit\footnote{This definition of max-norm is a bit different than the
  one used in the context of matrix factorization~\citep{srebro05}. The
  later is similar to the minimum upper bound over $\ell_2$ norm of
  both outgoing edges from the input units and incoming edges to the
  output units in a two layer feed-forward
  network.}~\citep{goodfellow13,srivastava14}. The max-norm correspond
to ``per-unit" regularization when we set $q=\infty$ in
equation~\eqref{eq:mu} and can be written in the following form:
\begin{equation}
  \label{eq:mu}
  \mu_{p,\infty}(w) =\sup_{v \in V}\left(\sum_{(u\rightarrow v) \in E} \left\lvert w_{(u\rightarrow v)}\right\rvert ^p\right)^{1/p}
\end{equation}

Weight decay is probably the most commonly used regularizer. On the
other hand, per-unit regularization might not seem ideal as it is
very extreme in the sense that the value of regularizer corresponds to
the highest value among all nodes.  However, the situation is very
different for networks with RELU activations (and other activation
functions with non-negative homogeneity property).  In these cases,
per-unit $\ell_2$ regularization has shown to be very
effective~\citep{srivastava14}. The main reason could be because RELU
networks can be rebalanced in such a way that all hidden units have
the same norm. Hence, per-unit regularization will not be a crude
measure anymore.

Since $\mu_{p,\infty}$ is not rescaling invariant and the values of the
scale measure are different for rescaling equivalent networks, it is
desirable to look for the minimum value of a regularizer among all
rescaling equivalent networks. Surprisingly, for a feed-forward
network, the minimum $\ell_p$ per-unit regularizer among all rescaling
equivalent networks can be efficiently computed by a single forward
step. To see this, we consider the vector $\pi(w)$, the {\em path
  vector}, where the number of coordinates of $\pi(w)$ is equal to the
total number of paths from the input to output units and each
coordinate of $\pi(w)$ is the equal to the product of weights along a
path from an input nodes to an output node. The $\ell_p$-path
regularizer is then defined as the $\ell_p$ norm of
$\pi(w)$~\citep{neyshabur2015norm}:
\begin{equation}\label{eq:defphi}
  \phi_p(w) = \norm{\pi(w)}_p = \left(\sum_{\vin[i] \overset{e_1}\rightarrow v_1\overset{e_2}\rightarrow v_2\dots\overset{e_d}{\rightarrow}\vout[j]} \left|\prod_{k=1}^d w_{e_k}\right|^p\right)^{1/p}
\end{equation}
The following Lemma establishes that the $\ell_p$-path regularizer
corresponds to the minimum over all equivalent networks of the
per-unit $\ell_p$ norm:
\begin{lem}[\cite{neyshabur2015norm}]\label{lem:path-unit}
$\displaystyle \phi_p(w) = \min_{\tilde{w} \sim w} \bigg(\mu_{p,\infty}(\tilde{w})\bigg)^d$
\end{lem}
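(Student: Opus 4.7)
\emph{Plan.} My approach is to prove the two inequalities $\phi_p(w)\le\min_{\tilde w \sim w}\mu_{p,\infty}(\tilde w)^d$ and $\phi_p(w)\ge\min_{\tilde w \sim w}\mu_{p,\infty}(\tilde w)^d$ separately. The cornerstone observation is that $\phi_p$ is itself rescaling-invariant: the rescaling $\rho_{c,v}$ multiplies exactly one edge on every path through $v$ by $c$ and divides exactly one other edge on that same path by $c$, leaving the path product unchanged, while paths avoiding $v$ are untouched. Hence $\phi_p(w) = \phi_p(\tilde w)$ for every $\tilde w\sim w$, and it suffices to establish (a) $\phi_p(\tilde w) \le \mu_{p,\infty}(\tilde w)^d$ for every fixed $\tilde w$, and (b) that some specific $\tilde w^{\star}\sim w$ attains equality in (a).

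For (a), I would induct on depth. Let $S_k(v)$ denote the sum of $\bigl|\prod_{e \in P}\tilde w_e\bigr|^p$ over all directed paths $P$ of length exactly $k$ from an input node to $v$. The base case $S_0(v)\in\{0,1\}$ is immediate, and the step uses
\begin{equation*}
S_k(v) = \sum_{u \to v}|\tilde w_{u \to v}|^p\, S_{k-1}(u) \le \Bigl(\max_u S_{k-1}(u)\Bigr)\sum_{u \to v}|\tilde w_{u \to v}|^p \le \mu_{p,\infty}(\tilde w)^{p(k-1)}\cdot\mu_{p,\infty}(\tilde w)^p,
\end{equation*}
yielding $S_k(v)\le\mu_{p,\infty}(\tilde w)^{pk}$. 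Evaluating at $k=d$ and summing over output units gives $\phi_p(\tilde w)^p = \sum_{\vout[j]} S_d(\vout[j]) \le \mu_{p,\infty}(\tilde w)^{pd}$, and (a) follows by taking $p$-th roots.

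For (b), I would exhibit $\tilde w^{\star}$ as a composition of rescalings $\rho_{c_v,v}$, one per hidden node $v$. Writing $M := \phi_p(w)^{1/d}$ for the target per-unit norm and $B_v := c_v^{-p}$, the balancing constraint $\sum_{u\to v}|\tilde w^{\star}_{u\to v}|^p = M^p$ at every hidden $v$ rearranges into the linear recursion $B_v = M^{-p}\sum_{u\to v}B_u\,|w_{u\to v}|^p$, with boundary condition $B_u = 1$ at input units. Unrolling the recursion gives $B_v = S_{k_v}(v)/M^{p\cdot k_v}$ where $k_v$ is the depth of $v$, so the explicit choice $c_v^p = M^{p\cdot k_v}/S_{k_v}(v)$ equalizes every hidden-unit incoming $p$-norm to $M$. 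Consequently $\mu_{p,\infty}(\tilde w^{\star}) = M = \phi_p(w)^{1/d}$, which gives $\mu_{p,\infty}(\tilde w^{\star})^d = \phi_p(w)$ and matches (a) with equality.

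The main obstacle is the consistency of the construction in (b) at the output boundary: the recursion forces $B_{\vout[j]} = S_d(\vout[j])/M^{pd}$, which must equal $1$ for the rescaling to leave output nodes untouched. In the single-output setting $S_d(\vout) = \phi_p(w)^p$ and the constraint closes exactly with the chosen $M$; for layered multi-output networks the same argument applies outlet-by-outlet, and degenerate nodes with $S_{k_v}(v)=0$ correspond to ``dead'' subgraphs that carry no path mass and can be excised without changing either $\phi_p(w)$ or $\mu_{p,\infty}(\tilde w^\star)$.
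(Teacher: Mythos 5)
The survey does not actually prove this lemma (it is imported verbatim from the cited reference), so there is no in-paper argument to compare against; judged on its own terms, your overall strategy --- rescaling-invariance of $\phi_p$, a pointwise bound $\phi_p(\tilde w)\le\mu_{p,\infty}(\tilde w)^d$, and an explicit balancing rescaling that attains it --- is the natural one, and your part (b) construction (choosing $c_v^p=M^{p k_v}/S_{k_v}(v)$ so that every hidden unit's incoming $\ell_p$ norm equals $M=\phi_p(w)^{1/d}$) is exactly the right idea.

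The genuine gap is the final step of part (a), and it resurfaces in your output-boundary discussion in (b). The induction gives $S_d(\vout[j])\le\mu_{p,\infty}(\tilde w)^{pd}$ for each output unit \emph{separately}, so summing over the $C$ output units only yields $\phi_p(\tilde w)^p=\sum_j S_d(\vout[j])\le C\,\mu_{p,\infty}(\tilde w)^{pd}$; the factor $C^{1/p}$ is not removable. For $C\ge 2$ both your inequality (a) and the lemma as literally stated here fail: with one input, one hidden unit $h$, two outputs and all weights equal to $1$, we have $d=2$ and $\phi_p(w)=2^{1/p}$, while every equivalent network has the form $\rho_{c,h}(w)$ with $\mu_{p,\infty}(\rho_{c,h}(w))=\max(c,1/c)$, so that $\min_{\tilde w\sim w}\mu_{p,\infty}(\tilde w)^d=1<2^{1/p}$. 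Your ``outlet-by-outlet'' remark cannot repair this, because the rescaling constant at each hidden unit is shared by all outputs, so the outputs' incoming norms cannot be balanced independently. What the two halves of your argument actually establish for $C\ge 2$ is $\min_{\tilde w\sim w}\mu_{p,\infty}(\tilde w)^{d}=\max_j\bigl(S_d(\vout[j])\bigr)^{1/p}$, the \emph{maximum} over output units of the per-output path norm (note $S_d(\vout[j])$ is itself rescaling-invariant), rather than the $\ell_p$ aggregate defining $\phi_p$. Your proof is complete and correct in the single-output case, where $S_d(\vout)=\phi_p(w)^p$ makes both boundary conditions close exactly (and the degenerate $S_{k_v}(v)=0$ nodes are handled as you indicate); for $C>1$ the statement itself has to be read one output unit at a time.
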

The definition \eqref{eq:defphi} of the $\ell_p$-path regularizer
involves an exponential number of terms.  But it can be computed
efficiently by dynamic programming in a single forward step using the
following equivalent form as nested sums:
\begin{equation*}
\phi_p(w) = \left(\sum_{(v_{d-1}\rightarrow v_{\OUT}[j])\in E}\left| w_{(v_{d-1}\rightarrow v_{\OUT}[j])}\right|^p\sum_{(v_{d-2}\rightarrow v_{d-1})\in E}\dots \sum_{(v_{\IN}[i]\rightarrow v_{1})\in E} \left| w_{(v_{\IN}[i]\rightarrow v_{1})}\right|^p \right)^{1/p}
\end{equation*}
A straightforward consequence of Lemma \ref{lem:path-unit} is that the $\ell_p$ path-regularizer $\phi_p$ is invariant to rescaling, i.e. for any $\tilde{w} \sim w$, $\phi_p(\tilde{w})=\phi_p(w)$.

\section{Path-SGD: An Approximate Path-Regularized Steepest Descent}
Motivated by empirical performance of max-norm regularization and the
fact that path-regularizer is invariant to rescaling, we are
interested in deriving the steepest descent direction with respect to
the path regularizer $\phi_p(w)$:
\begin{align}\label{eq:sd}
w^{(t+1)} &= \arg\min_w \;\;\eta \inner{ \nabla L(w^{(t)})}{w} + \frac{1}{2}\norm{\pi(w)-\pi(w^{(t)})}_p^2\\ \notag
&= \arg\min_w \;\;\eta \inner{ \nabla L(w^{(t)}) }{w} + \left(\sum_{\vin[i] \overset{e_1}\rightarrow v_1\overset{e_2}\rightarrow v_2\dots\overset{e_d}{\rightarrow}\vout[j]}\left(\prod_{k=1}^d w_{e_k} - \prod_{k=1}^d w^{(t)}_{e_k})\right)^p\right)^{2/p}\\ \notag
& = \arg\min_w J^{(t)}(w)
\end{align}
The steepest descent step \eqref{eq:sd} is hard to calculate exactly.  Instead, we will update each coordinate $w_e$ independently (and synchronously) based on~\eqref{eq:sd}. That is:
\begin{equation}
w^{(t+1)}_e =\arg\min_{w_e} \;J^{(t)}(w) \qquad \text{s.t.}\;\;\forall_{e'\neq e} \;\;w_{e'}=w^{(t)}_{e'}
\end{equation}
Taking the partial derivative with respect to $w_e$ and setting it to zero we obtain:
\begin{equation*}
0 =\eta \frac{\partial L}{\partial w_e}(w^{(t)}) - \left(w_e-w^{(t)}_e\right) \left(\sum_{v_{\text{in}}[i] \dots \stackrel{e}{\rightarrow} \dots v_{\text{out}}[j]} \prod_{e_k\neq e} \abs{w^{(t)}_e}^p\right)^{2/p}
\end{equation*}
where $v_{\text{in}}[i] \dots \stackrel{e}{\rightarrow} \dots v_{\text{out}}[j]$ denotes the paths from any input unit $i$ to any output unit $j$ that includes $e$. Solving for $w_e$  gives us the following update rule:
\begin{equation}\label{eq:update}
\hat{w}^{(t+1)}_e = w^{(t)}_e- \frac{\eta}{\gamma_p(w^{(t)},e)}\frac{\partial L}{\partial w}(w^{(t)})
\end{equation}
where $\gamma_p(w,e)$ is given as
\begin{equation}
\gamma_p(w,e) =\left(\sum_{v_{\text{in}}[i] \dots \stackrel{e}{\rightarrow} \dots v_{\text{out}}[j]} \prod_{e_k\neq e} \abs{w_{e_k}}^p\right)^{2/p}
\end{equation}
We call the optimization using the update rule \eqref{eq:update} path-normalized gradient descent. When used in stochastic settings, we refer to it as Path-SGD.

\removed{ In the following lemma, we prove that if the relative change in the weights of the network is small enough (which can be achieved by choosing a small enough stepsize), then the update rule~\eqref{eq:update} is an approximate steepest direction with respect to $\ell_p$-path regularizer.
\begin{lem}
Let $\delta_{\max}$ be the maximum relative change in a weight in the network by update rule~\eqref{eq:update}, i.e.
$\delta_{\max} = \max_{e\in E} \abs{ w^{(t+1)}_e - w^{(t)}_e }/\abs{w^{(t)}_e}$. Then, if $\delta_{\max} \leq \frac{1}{d}$, we have that:
$$
\norm{\pi(w^{(t+1)}) - \pi(w^{(t)})}_p \leq 2d\delta_{\max}\phi_p(w^{(t)})
$$
and so
$$
\phi_p(w^{(t+1)}) \leq 2d\delta_{\max}\phi_p(w^{(t)}).
$$
\end{lem}
\begin{proof}
Consider any path $\vin[i] \overset{e_1}\rightarrow v_1\overset{e_2}\rightarrow v_2\dots\overset{e_d}{\rightarrow}\vout[j]$ in the network. It is clear that:
\begin{equation*}
\abs{ \prod_{k=1}^d w^{(t+1)}_{e_k} - \prod_{k=1}^d w_{e_k}} \leq \left[(1+\delta_{\max})^d -1\right]\abs{\prod_{k=1}^d w_{e_k}} 
\end{equation*}
Applying the above inequality on all paths and taking the sum gives us:
\begin{align*}
\norm{\pi(w^{(t+1)}) - \pi(w^{(t)})}_p  &\leq \left[(1+\delta_{\max})^d-1\right]\phi_p(w^{(t)})\\
&\leq (e-1)d\delta_{\max}
\end{align*}
\end{proof}
}

Now that we know Path-SGD is an approximate steepest descent with respect to the path-regularizer, we can ask whether or not this makes Path-SGD a {\em rescaling invariant} optimization method. The next theorem proves that Path-SGD is indeed rescaling invariant.

\begin{thm}
Path-SGD is rescaling invariant.
\end{thm}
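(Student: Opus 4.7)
The plan is to reduce to the atomic case and then check it directly. Since the equivalence relation $\sim$ is generated by elementary rescalings $\rho_{c,v}$, and since the claim for one step implies the claim for $t$ steps by trivial induction, it suffices to prove the following atomic statement: for any weights $w$, any internal node $v$, and any $c>0$, if $\tilde w = \rho_{c,v}(w)$ and $\hat w$, $\hat{\tilde w}$ denote the Path-SGD updates of $w$ and $\tilde w$ respectively, then $\hat{\tilde w} = \rho_{c,v}(\hat w)$. Fix such $w$, $\tilde w$, $c$, $v$, and partition the edges into $\mathrm{In}(v)$ (incoming to $v$), $\mathrm{Out}(v)$ (outgoing from $v$), and $\mathrm{Rest}$.

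First I would establish how the gradient transforms. The key observation is that for every directed input--output path $P$, the product $\prod_{e\in P} w_e$ is preserved by $\rho_{c,v}$: either $P$ avoids $v$ entirely (and no weight on $P$ is altered), or $P$ uses exactly one edge of $\mathrm{In}(v)$ and one of $\mathrm{Out}(v)$, whose factors $c$ and $1/c$ cancel. Because for RELU networks the loss depends on $w$ only through such path-products (weighted by the active-path indicators, and the activation pattern of $\tilde w$ matches that of $w$ since rescaling preserves the sign of every pre-activation), expanding $\partial L/\partial w_e$ as a sum over paths through $e$ shows that the quantity $w_e\cdot \partial L/\partial w_e$ is invariant under $\rho_{c,v}$. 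Therefore $\partial L/\partial \tilde w_e$ equals $(1/c)\,\partial L/\partial w_e$ for $e\in\mathrm{In}(v)$, $c\,\partial L/\partial w_e$ for $e\in\mathrm{Out}(v)$, and $\partial L/\partial w_e$ for $e\in\mathrm{Rest}$.

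Next I would compute the transformation of $\gamma_p$. For $e\in\mathrm{In}(v)$, every path through $e$ contains exactly one edge of $\mathrm{Out}(v)$ whose weight is scaled by $1/c$, and no other edge on the path is changed. Hence $\prod_{e_k\neq e}|\tilde w_{e_k}|^p = c^{-p}\prod_{e_k\neq e}|w_{e_k}|^p$, and taking the $2/p$-power gives $\gamma_p(\tilde w,e) = c^{-2}\gamma_p(w,e)$. Symmetrically $\gamma_p(\tilde w,e) = c^{2}\gamma_p(w,e)$ for $e\in\mathrm{Out}(v)$. For $e\in\mathrm{Rest}$, paths through both $e$ and $v$ pick up cancelling $c$ and $1/c$ factors while paths avoiding $v$ are unchanged, so $\gamma_p(\tilde w,e) = \gamma_p(w,e)$.

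Finally I would substitute these into the update rule \eqref{eq:update} and verify the $c$-factors compose correctly in each of the three classes. On $\mathrm{Rest}$ the starting weight, gradient, and $\gamma_p$ all coincide, so the updates agree. On $\mathrm{In}(v)$ the starting weight is scaled by $c$ and the increment picks up $(1/c)/(c^{-2}) = c$, so the updated weight is scaled by $c$. On $\mathrm{Out}(v)$ both the starting weight and the increment are scaled by $1/c$. Thus $\hat{\tilde w} = \rho_{c,v}(\hat w)$, proving the atomic case and hence the theorem. The main subtlety is the path-sum identity for the gradient together with the observation that RELU's activation pattern is rescaling-invariant; once these are in hand the rest is bookkeeping of $c$-factors.
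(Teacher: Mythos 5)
Your proposal is correct and follows essentially the same route as the paper: reduce to a single elementary rescaling $\rho_{c,v}$, partition edges into incoming/outgoing/other, show $\gamma_p$ scales by $c^{\mp 2}$ and the gradient by $c^{\mp 1}$, and substitute into the update rule \eqref{eq:update}. The only difference is that you explicitly justify the gradient scaling via the path-product expansion and the invariance of the RELU activation pattern, a step the paper asserts without proof, so your write-up is if anything slightly more complete.
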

\begin{proof}
It is sufficient to prove that using the update rule~\eqref{eq:update}, for any $c>0$ and any $v\in E$, if $\tilde{w}^{(t)} = \rho_{c,v}(w^{(t)})$, then $\tilde{w}^{(t+1)} = \rho_{c,v}(w^{(t+1)})$. For any edge $e$ in the network, if $e$ is neither incoming nor outgoing edge of the node $v$, then $\tilde{w}(e)=w(e)$, and since the gradient is also the same for edge $e$ we have $\tilde{w}^{(t+1)}_e=w^{(t+1)}_e$. However, if $e$ is an incoming edge to $v$, we have that $\tilde{w}^{(t)}(e)=cw^{(t)}(e)$. Moreover, since the outgoing edges of $v$ are divided by $c$, we get $\gamma_p(\tilde{w}^{(t)},e) = \frac{\gamma_p(w^{(t)},e)}{c^2}$ and $\frac{\partial L}{\partial w_e}(\tilde{w}^{(t)})= \frac{\partial L}{c\partial w_e}(w^{(t)})$. Therefore,
\begin{align*}
{\tilde{w}}^{(t+1)}_e &= cw^{(t)}_e - \frac{c^2\eta}{\gamma_p(w^{(t)},e)} \frac{\partial L}{c\partial w_e}(w^{(t)})\\
&= c\left(w^{(t)} - \frac{\eta}{\gamma_p(w^{(t)},e)} \frac{\partial L}{\partial w_e}(w^{(t)})\right) = cw^{(t+1)}_e.
\end{align*}
A similar argument proves the invariance of Path-SGD update rule for outgoing edges of $v$. Therefore, Path-SGD is rescaling invariant.
\end{proof}

\paragraph{Efficient Implementation:}
The Path-SGD update rule~\eqref{eq:update}, in the way it is written,
needs to consider all the paths, which is exponential in the depth of
the network. However, it can be calculated in a time that is no more
than a forward-backward step on a single data point. That is, in a
mini-batch setting with batch size $B$, if the backpropagation on the
mini-batch can be done in time $BT$, the running time of the Path-SGD
on the mini-batch will be roughly $(B+1)T$ -- a very moderate runtime
increase with typical mini-batch sizes of hundreds or thousands of
points.  Algorithm ~\ref{alg:update} shows an efficient implementation
of the Path-SGD update rule.

We next compare Path-SGD to other optimization methods in both balanced and unbalanced settings.
 
\begin{algorithm}[t]
  \caption{Path-SGD update rule}\label{alg:update}
  \begin{algorithmic}[1]
  \State $\forall_{v\in V^0_{\IN}}\; \gamma_{\IN}(v)=1$\Comment{Initialization}
  \State $\forall_{v\in V^0_\OUT}\; \gamma_{\OUT}(v)=1$
  \For{$i=1\;\textbf{to}\;d\;$}
  \State $\forall_{v\in V^i_{\IN}} \gamma_{\IN}(v) = \sum_{(u\rightarrow v)\in E} \gamma_{\IN}(u)\abs{w_{(u,v)}}^p$
  \State $\forall_{v\in V^i_{\OUT}} \gamma_{\OUT}(v) = \sum_{(v\rightarrow u)\in E} \abs{w_{(v,u)}}^p\gamma_{\OUT}(u)$
  \EndFor
  \State $\forall_{(u\rightarrow v)\in E}\;\; \gamma(w^{(t)},(u,v)) = \gamma_{\IN}(u)^{2/p}\gamma_{\OUT}(v)^{2/p}$
  \State $\forall_{e\in E} w^{(t+1)}_e = w^{(t)}_e - \frac{\eta}{\gamma(w^{(t)},e)} \frac{\partial L}{\partial w_e}(w^{(t)})$\Comment{Update Rule}
   \end{algorithmic}
\end{algorithm}


\section{Experiments on Path-SGD}

We compare $\ell_2$-Path-SGD to two commonly used optimization methods in deep learning, SGD and AdaGrad. We conduct our experiments on 
four common benchmark datasets: the standard MNIST dataset of handwritten digits~\citep{lecun1998gradient}; 
CIFAR-10 and CIFAR-100 datasets of tiny images of natural scenes~~\citep{krizhevsky2009learning}; 
and Street View House Numbers (SVHN) dataset containing 
color images of house numbers collected by Google Street View~\citep{netzer2011reading}. 
Details of the datasets are shown in Table~\ref{table}.

\begin{table}[t]
\caption{General information on datasets used in the experiments on feedforward networks.}
\label{table}
\begin{center}
\begin{tabular}{c c c c c}
{\bf Data Set}  &{\bf Dimensionality}&{\bf Classes}&{\bf Training Set}&{\bf Test Set}
\\ \hline
CIFAR-10&3072 ($32 \times 32$ color)&10&50000&10000\\
CIFAR-100&3072 ($32 \times 32$ color)&100&50000&10000\\
MNIST&784 ($28 \times 28$ grayscale)&10&60000&10000\\
SVHN&3072 ($32 \times 32$ color)&10&73257&26032\\
\hline
\end{tabular}
\end{center}
\end{table}

\begin{figure}[t!]
\hspace{0.1in}
 \subfloat{
  \begin{tabular}{r}
   \includegraphics[width=\picwidth]{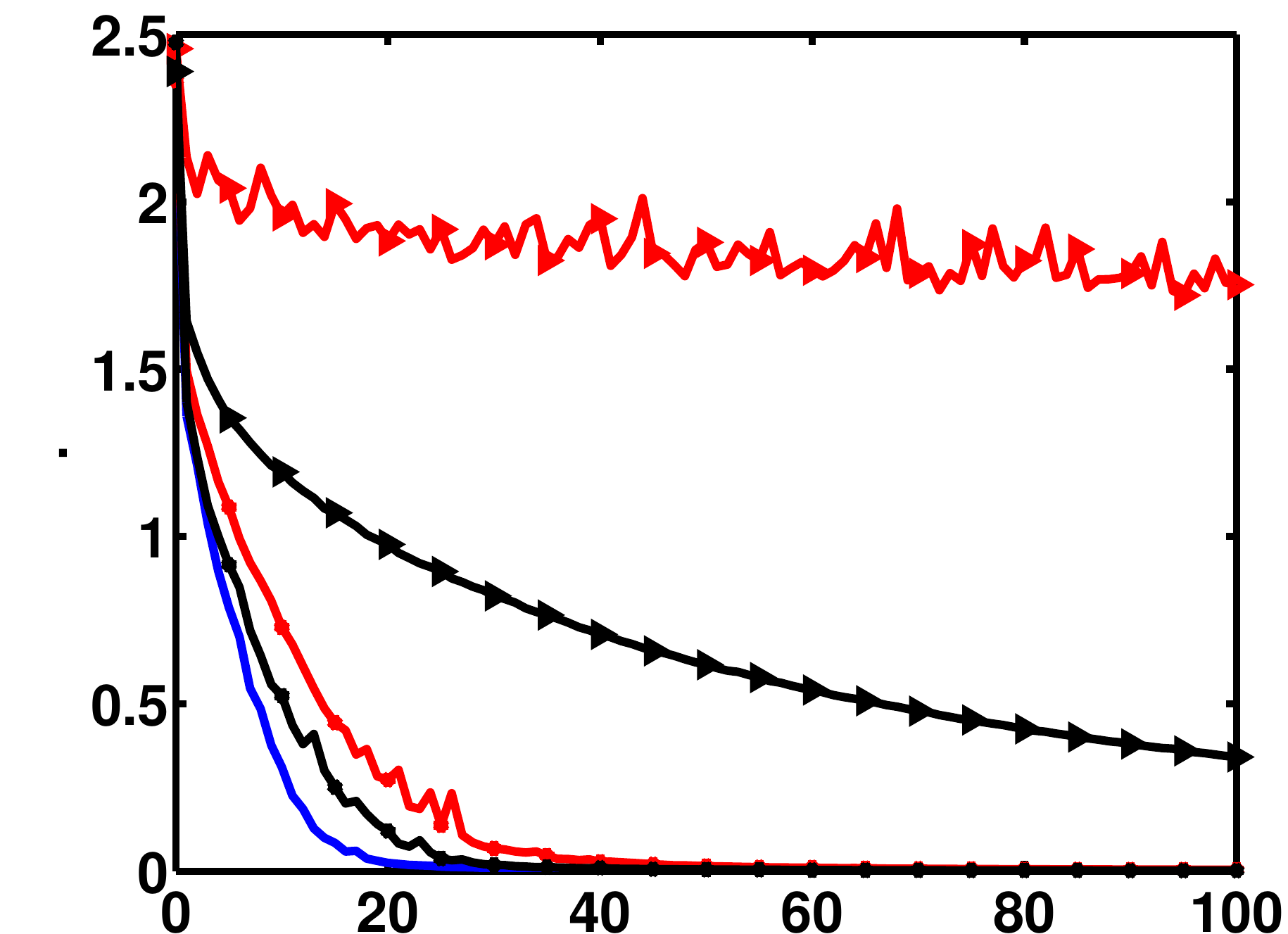} \\
      \includegraphics[width=\picwidth]{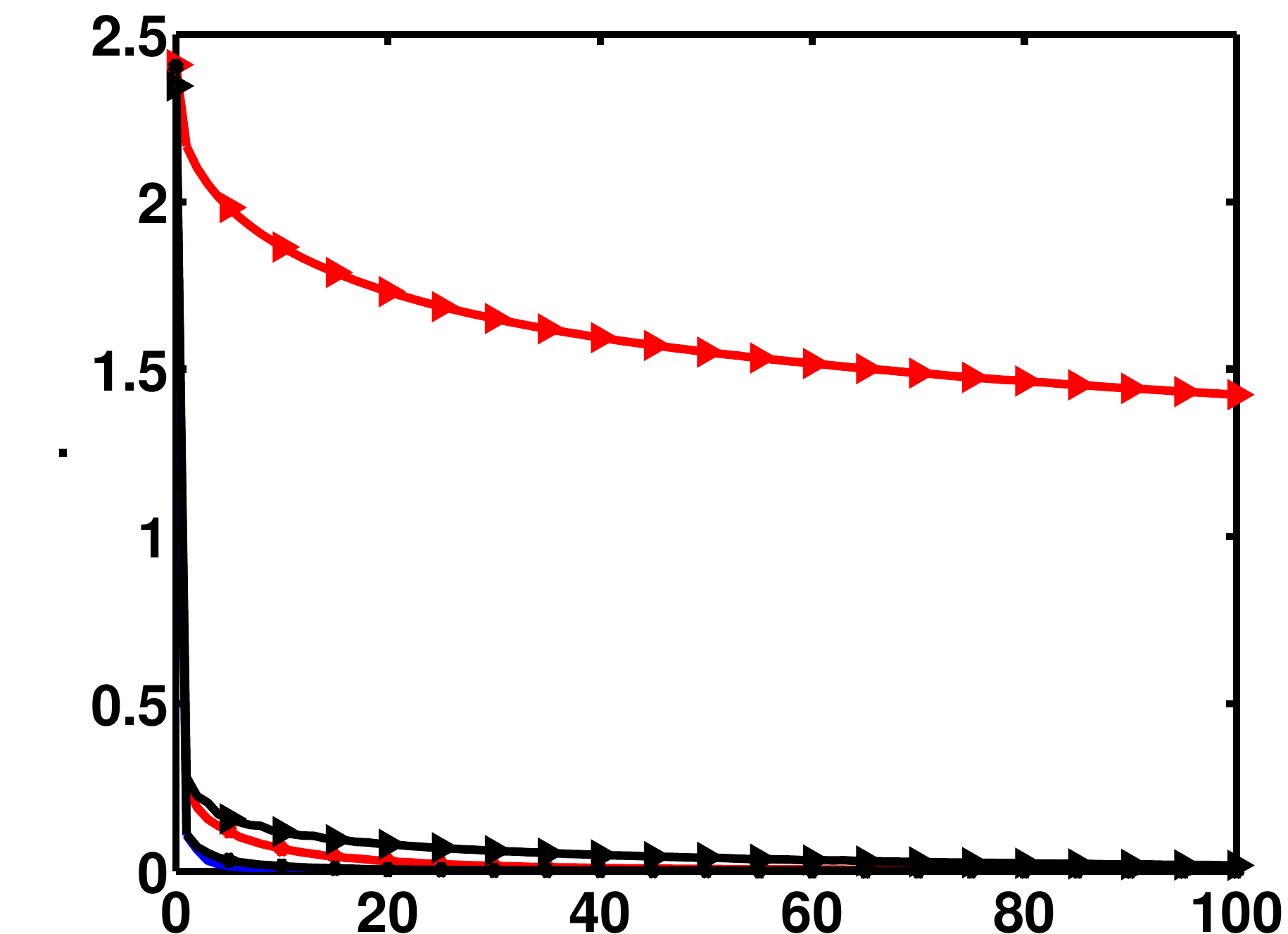} \\
   \includegraphics[width=1.76in]{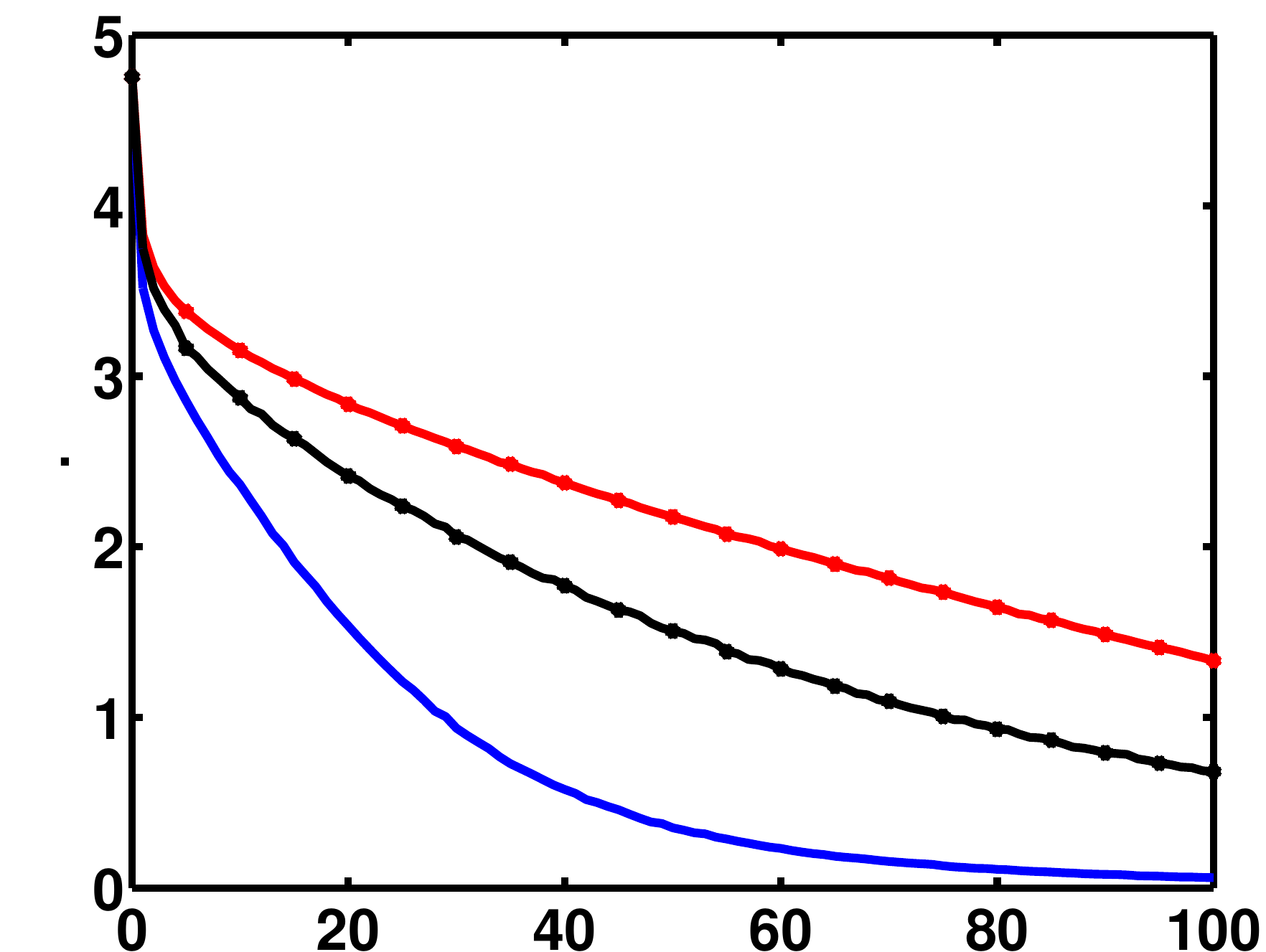} \\
   \includegraphics[width=\picwidth]{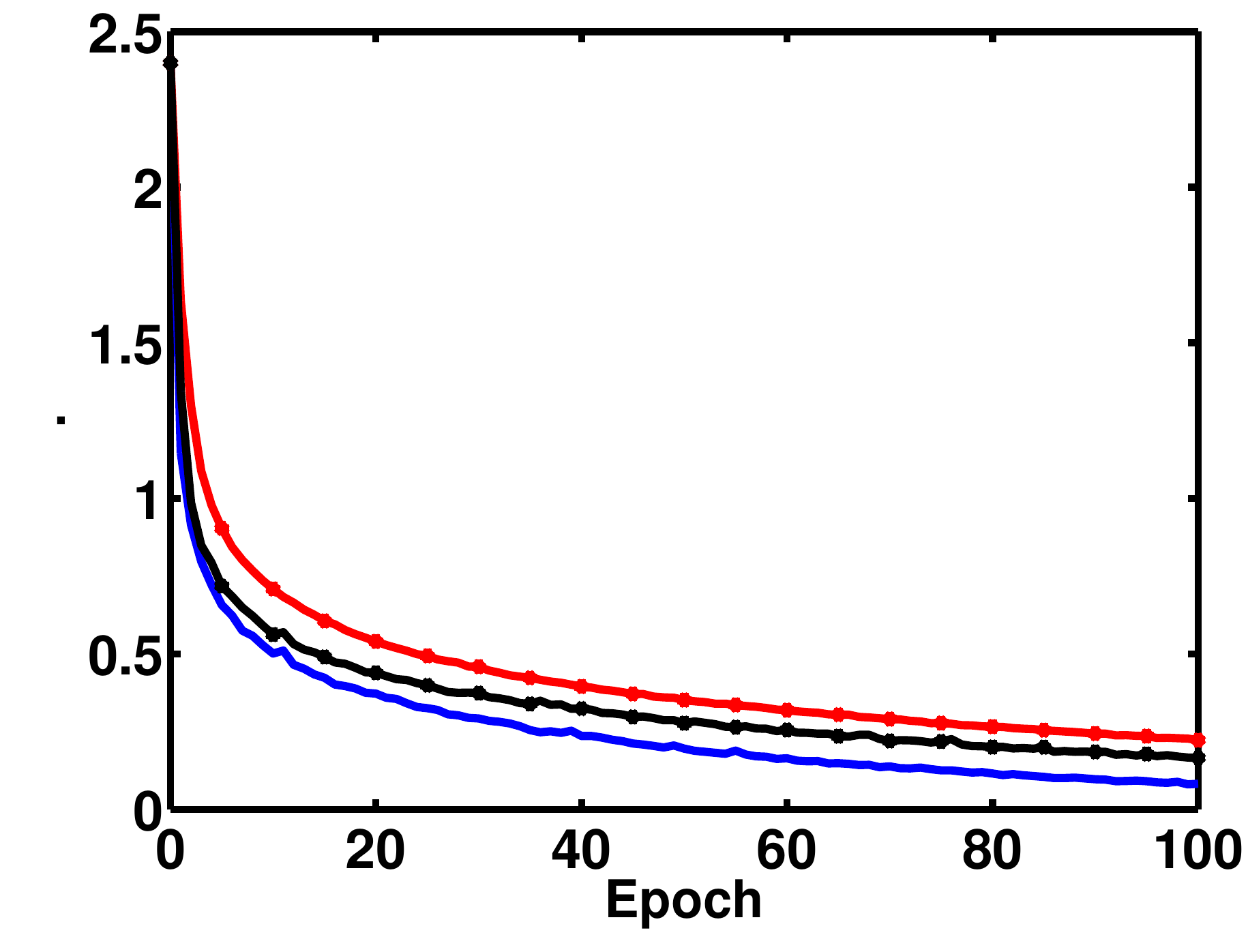}
  \end{tabular}
 }
 \hspace{-0.3in}
 \subfloat{
  \begin{tabular}{r}
   \includegraphics[width=\picwidth]{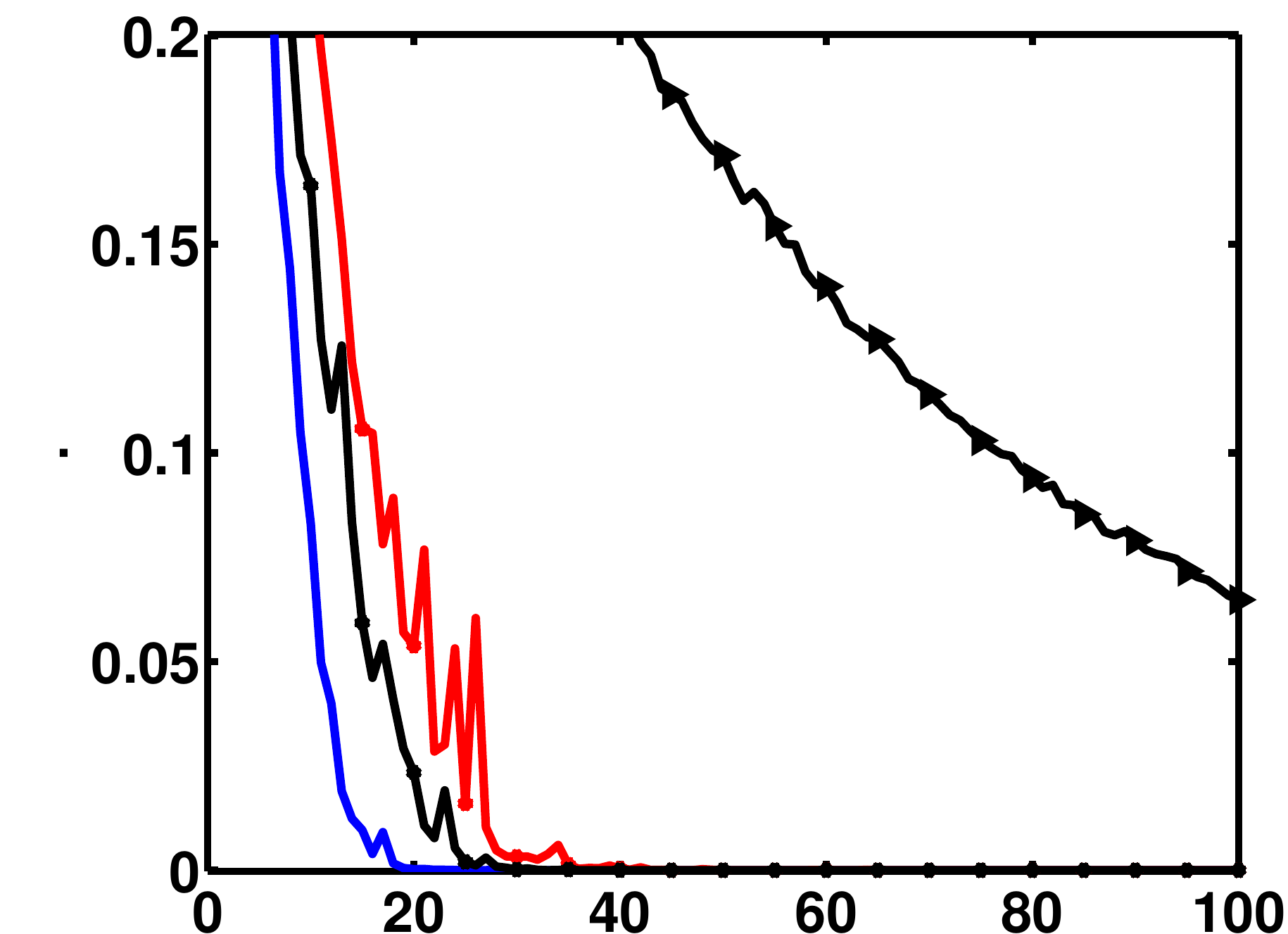} \\
   \includegraphics[width=1.9in]{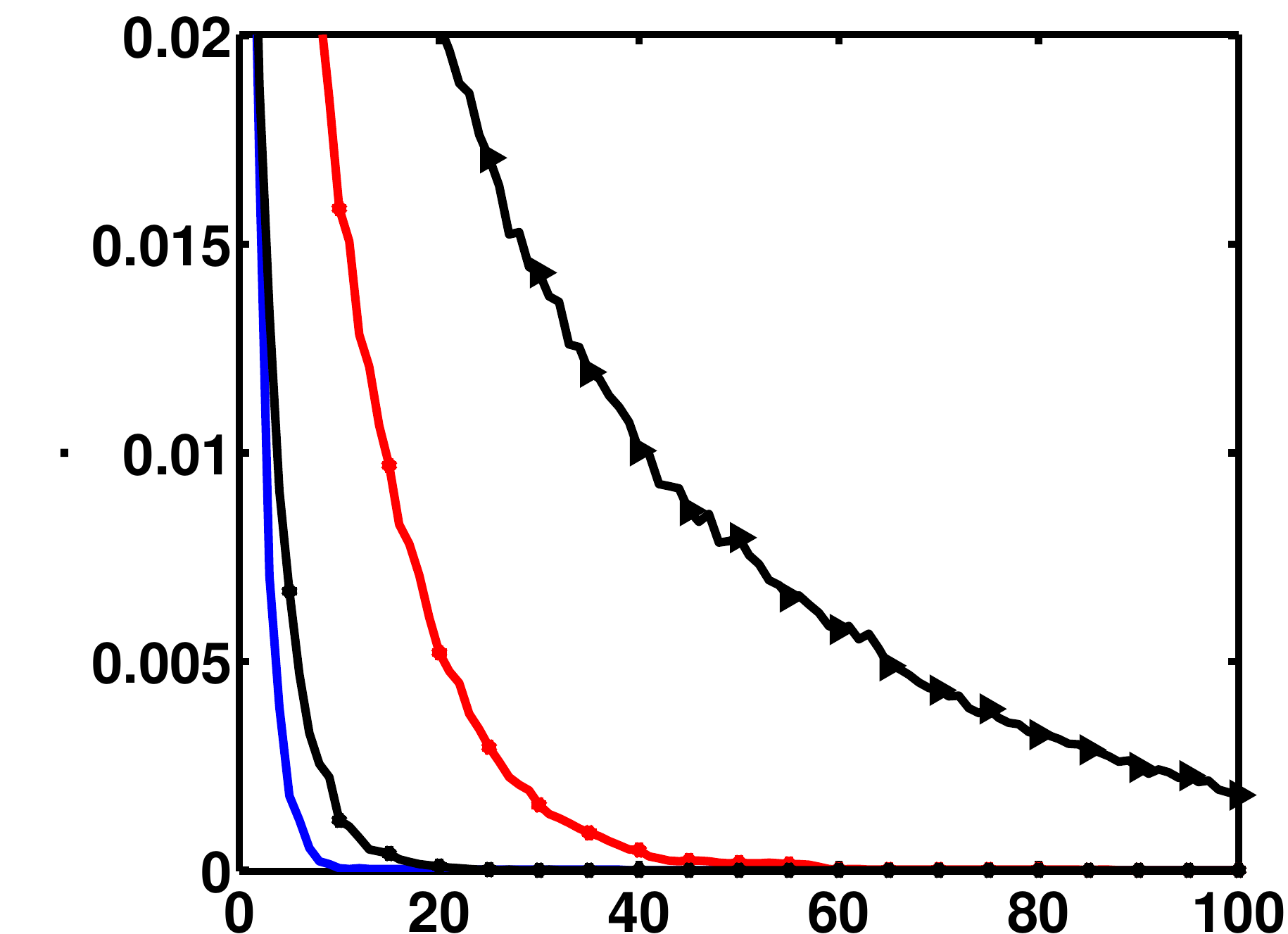} \\
      \includegraphics[width=\picwidth]{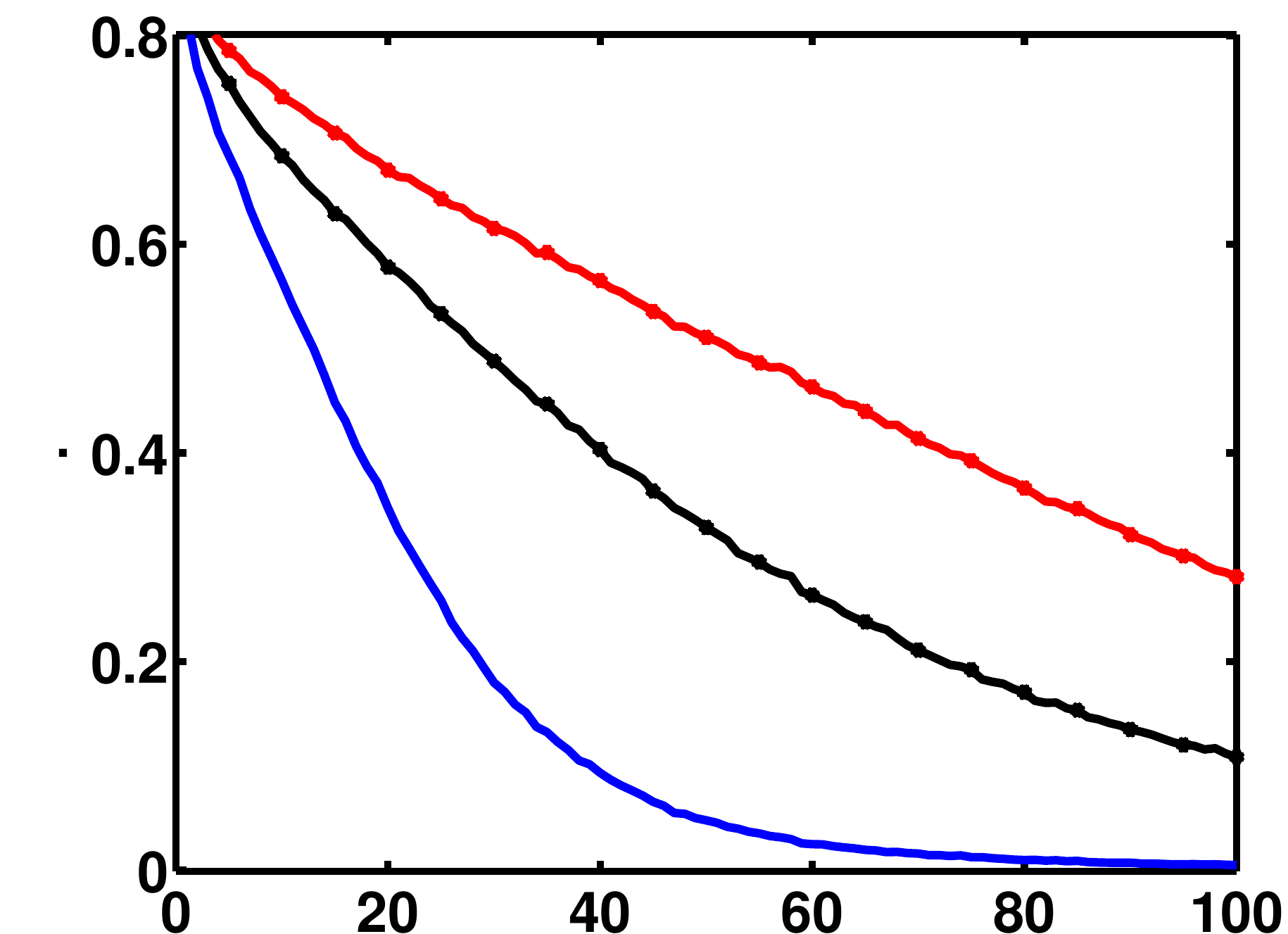} \\
   \includegraphics[width=1.85in]{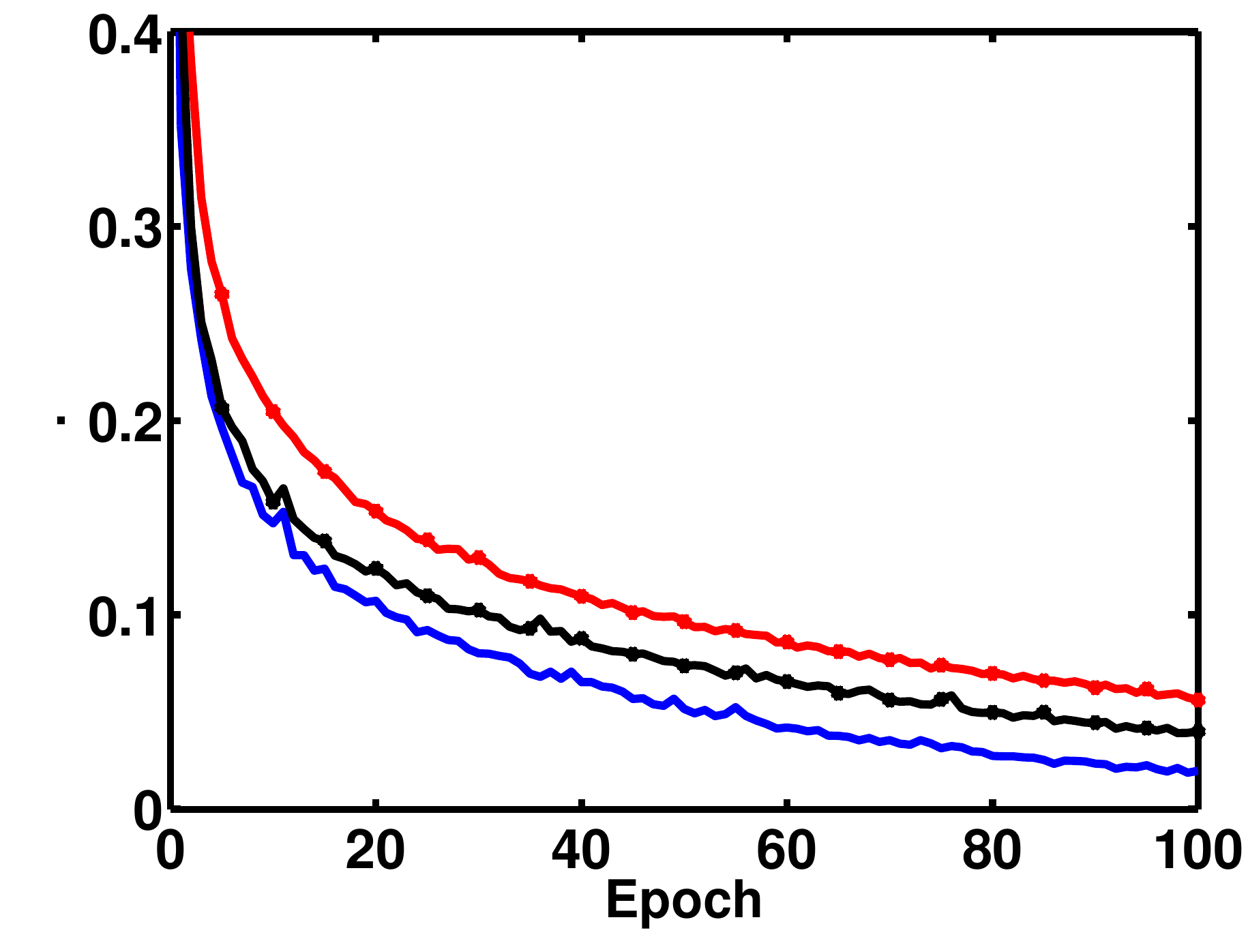}
  \end{tabular}
 }
 \hspace{-0.3in}
 \subfloat{
  \begin{tabular}{r}
   \includegraphics[width=\picwidth]{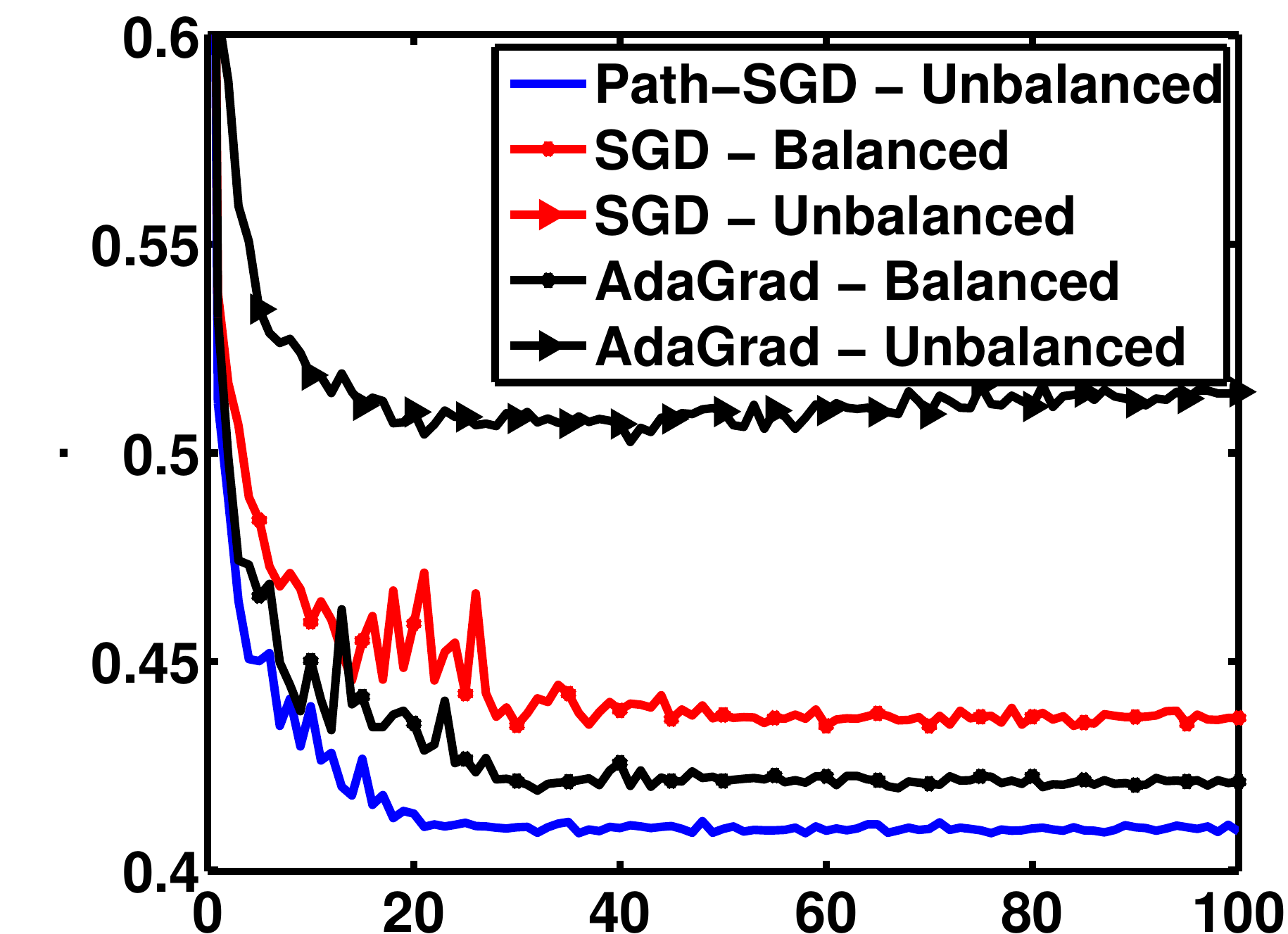} \\
      \includegraphics[width=1.9in]{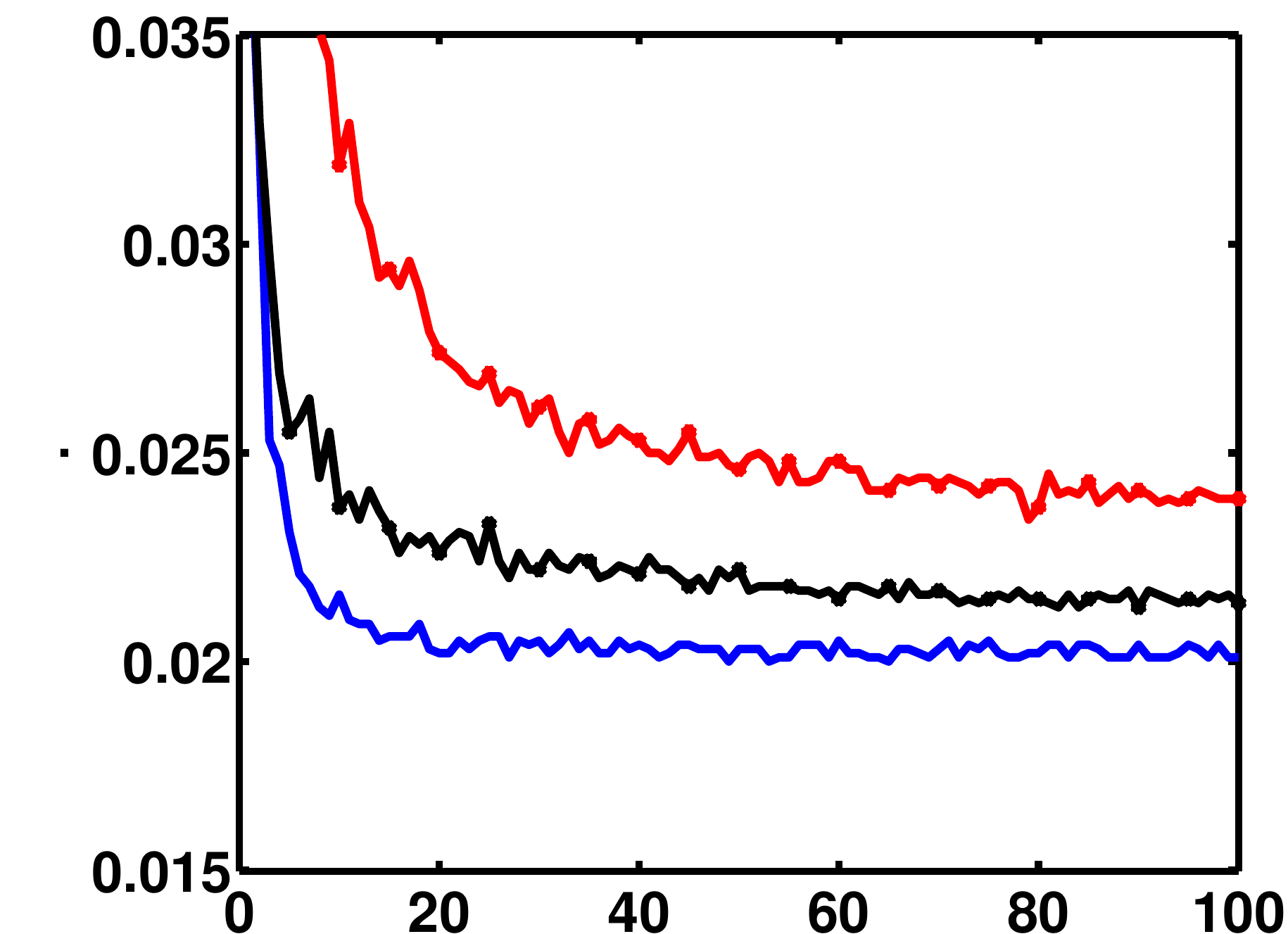} \\
   \includegraphics[width=\picwidth]{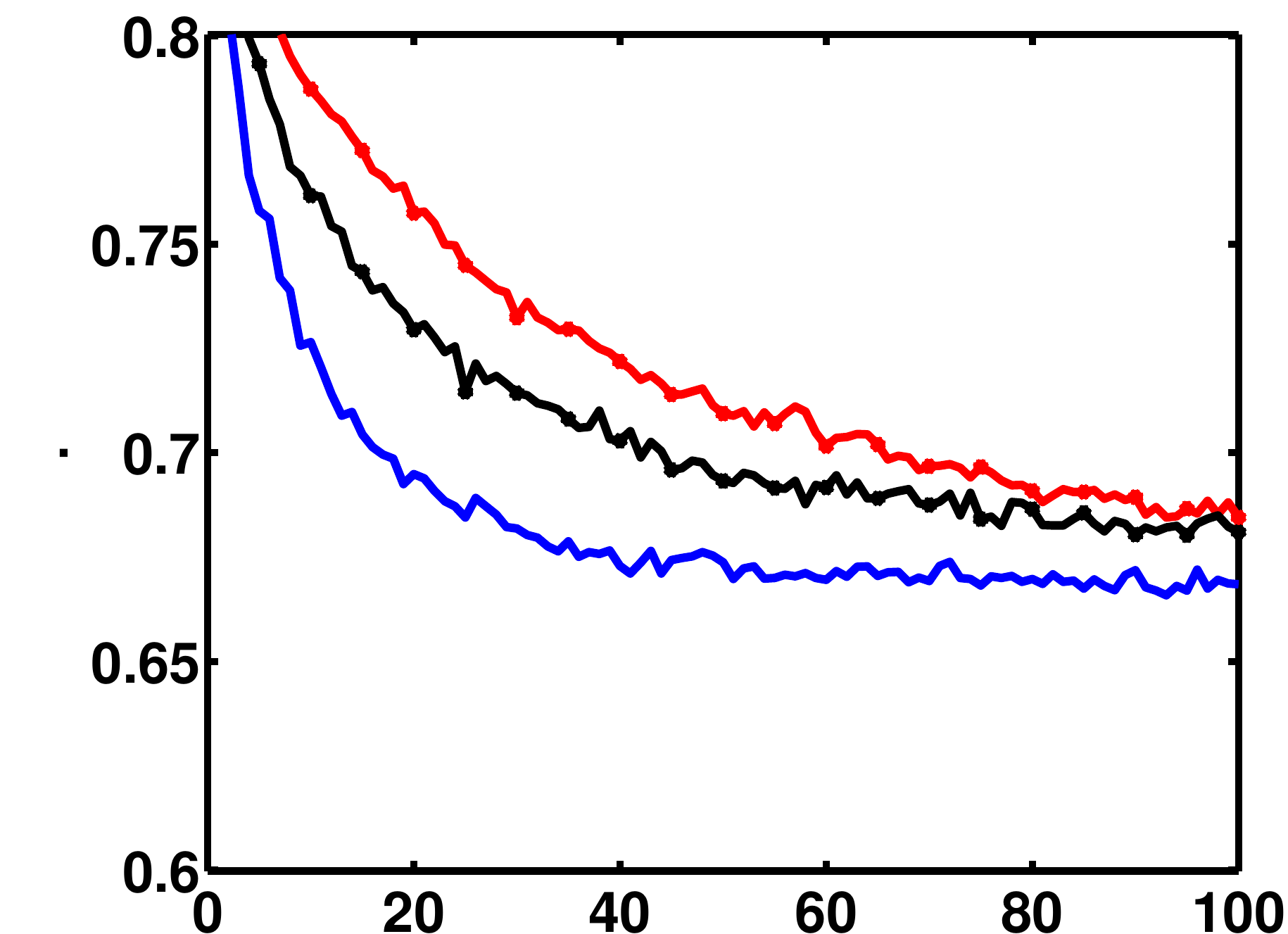} \\
   \includegraphics[width=1.85in]{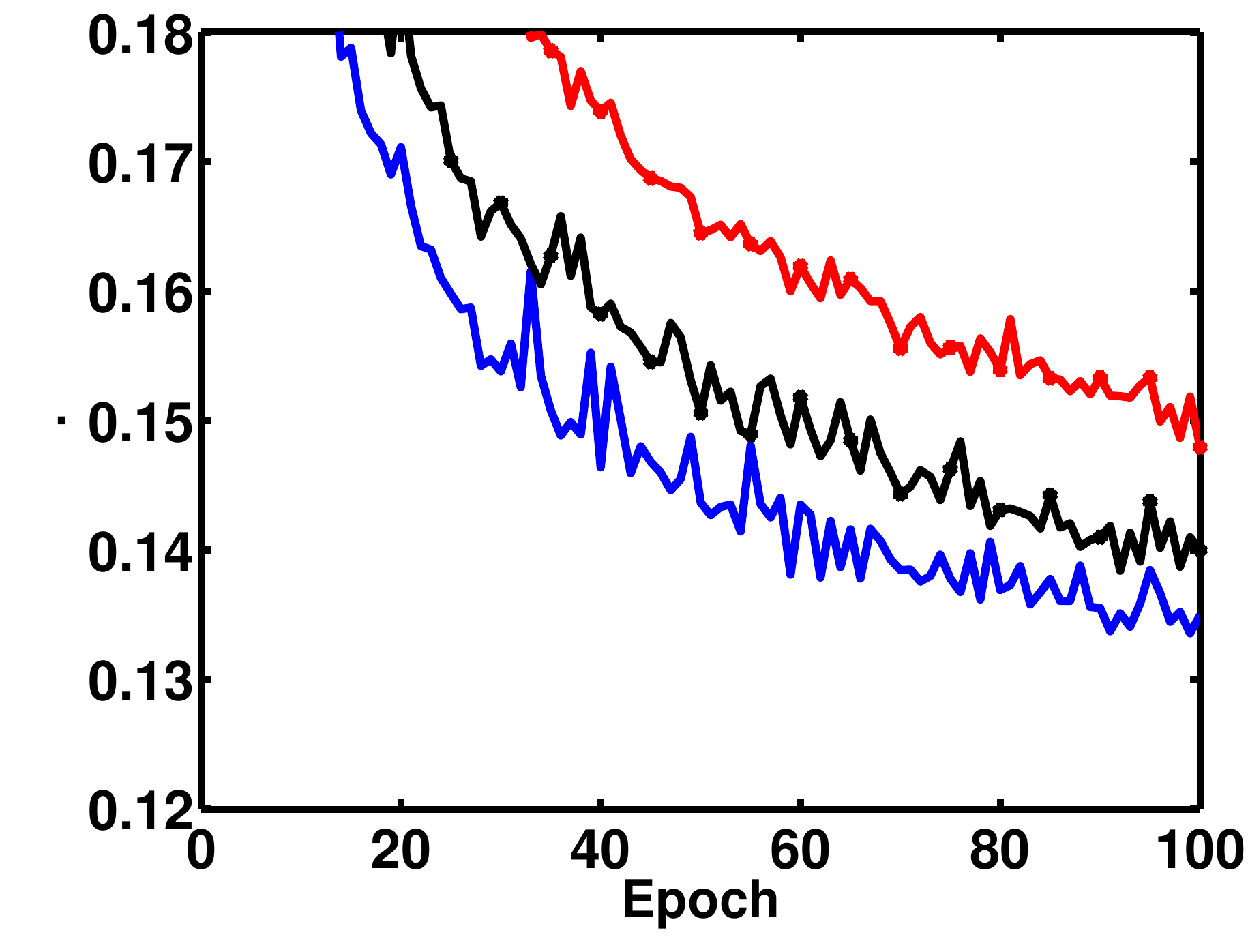}
  \end{tabular}
 }

 \begin{picture}(0,0)(0,0)
\rotatebox{90}{\put(342, 0){CIFAR-10}\put(240, 0){MNIST}\put(147, 0){CIFAR100}\put(50, 0){SVHN}}
\end{picture}
  \begin{picture}(0,0)(0,0)
{\put(30, 420){\small Cross-Entropy Training Loss}\put(187, 420){\small 0/1 Training Error}\put(328, 420){ \small 0/1 Test Error}}
\end{picture}
 \caption{\footnotesize Learning curves using different optimization methods 
 for 4 datasets without dropout. Left panel displays the cross-entropy objective function; 
middle and right panels show the corresponding values of the training and test errors, where the values are reported on
different epochs during the course of optimization.We tried both balanced and unbalanced initializations. In balanced initialization, incoming weights to each unit $v$ are initialized to i.i.d samples from a Gaussian distribution with standard deviation $1/\sqrt{\text{fan-in}(v)}$. In the unbalanced setting, we first initialized the weights to be the same as the balanced weights. We then picked 2000 hidden units randomly with replacement. For each unit, we multiplied its incoming edge and divided its outgoing edge by $10c$, where $c$ was chosen randomly from log-normal distribution. Although we
proved that Path-SGD updates are the same for balanced and unbalanced
initializations, to verify that despite numerical issues they are
indeed identical, we trained Path-SGD with both balanced and unbalanced initializations. 
Since the curves were exactly the same we only show a single curve. Best viewed in color.}
 \label{fig:drop-nodrop}
\vspace{-0.1in}
\end{figure}

In all of our experiments, we trained feed-forward networks with two hidden layers, each containing 4000 hidden units. We used mini-batches of size 100 and the step-size of $10^{-\alpha}$, where $\alpha$ is an integer between 0 and 10. To choose $\alpha$, for each dataset, we considered the validation errors over the validation set (10000 randomly chosen points that are kept out during the initial training) and picked the one that reaches the minimum error faster. We then trained the network over the entire training set. All the networks were trained both with and without dropout. When training with dropout, at each update step, we retained each unit with probability 0.5.

The optimization results are shown in Figure~\ref{fig:drop-nodrop}. For each of the four datasets, the plots for
objective function (cross-entropy), the training error and the test
error are shown from left to right where in each plot the values are
reported on different epochs during the optimization. The dropout is used for the 
experiments on CIFAR-100 and SVHN. Please see \cite{neyshabur2015path} for a more complete set of experimental results.

We can see in Figure~\ref{fig:drop-nodrop} that not only does Path-SGD often get to the same value of objective function, training and test error faster, but also the plots for test errors demonstrate that implicit regularization due to steepest descent with respect to path-regularizer leads to a solution that generalizes better. This provides further evidence on the role of implicit regularization in deep learning.

The results suggest that Path-SGD outperforms SGD and AdaGrad in two
different ways. First, it can achieve the same accuracy much faster
and second, the implicit regularization by Path-SGD leads to a local
minima that can generalize better even when the training error is
zero. This can be better analyzed
by looking at the plots for more number of epochs which we have
provided in  \cite{neyshabur2015path}. We should also point that
Path-SGD can be easily combined with AdaGrad or Adam to take advantage of the
adaptive stepsize or used together with a momentum term. This could
potentially perform even better compare to Path-SGD.

\section{Discussion}

We demonstrated the implicit regularization in deep learning through experiments and discussed
the importance of geometry of optimization in finding a ``low complexity'' solution. 
Based on that, we revisited the choice of the Euclidean geometry on the weights of
RELU networks, suggested an alternative optimization method
approximately corresponding to a different geometry, and showed that
using such an alternative geometry can be beneficial.  In this work we show
proof-of-concept success, and we expect Path-SGD to be beneficial also
in large-scale training for very deep convolutional networks.
Combining Path-SGD with AdaGrad, with momentum or with other
optimization heuristics might further enhance results.

Although we do believe Path-SGD is a very good optimization method,
and is an easy plug-in for SGD, we hope this work will also inspire
others to consider other geometries, other regularizers and
perhaps better, update rules.
A particular property of Path-SGD is its rescaling invariance, which
we argue is appropriate for RELU networks.  But Path-SGD is certainly
not the only rescaling invariant update possible, and other invariant
geometries might be even better.

Finally, we choose to use steepest descent because of its simplicity
of implementation.  A better choice might be mirror descent with
respect to an appropriate potential function, but such a construction
seems particularly challenging considering the non-convexity of neural
networks.

\acks{Research was partially funded by NSF award IIS-1302662 and Intel ICRI-CI.
}


\bibliography{biblio}
\end{document}